\newcommand{\BEAS}{\begin{eqnarray*}}
\newcommand{\EEAS}{\end{eqnarray*}}
\newcommand{\BEA}{\begin{eqnarray}}
\newcommand{\EEA}{\end{eqnarray}}
\newcommand{\BEQ}{\begin{equation}}
\newcommand{\EEQ}{\end{equation}}
\newcommand{\BIT}{\begin{itemize}}
\newcommand{\EIT}{\end{itemize}}
\newcommand{\BPF}{\begin{proof}}
\newcommand{\EPF}{\end{proof}}
\newcommand{\reals}{{\mbox{\textbf{R}}}}
\newcommand{\Prob}{\mathds{P}}
\newcommand{\Expect}{\mathds{E}}
\newcommand{\argmax}{\mathop{\rm argmax}}
\newcommand{\indf}{\mathbf{1}}
\def\cH{{\mathcal{H}}}
\def\cY{{\mathcal{Y}}}
\def\cX{{\mathcal{X}}}
\def\cN{{\mathcal{N}}}
\def\cC{{\mathcal{C}}}
\def\cA{{\mathcal{A}}}
\def\cL{{\mathcal{L}}}
\def\cR{{\mathcal{R}}}
\def\cV{{\mathcal{V}}}
\newcommand{\bitem}{\begin{itemize}}
\newcommand{\eitem}{\end{itemize}}
\newcommand{\beq}{\begin{equation}}
\newcommand{\eeq}{\end{equation}}
\newcommand{\bea}{\begin{eqnarray}}
\newcommand{\eea}{\end{eqnarray}}
\newcommand{\bear}{\begin{eqnarray*}}
\newcommand{\eear}{\end{eqnarray*}}
\newcommand{\efig}{\end{figure}}
\newcommand{\nDim}{\text{Ndim}}
\newcommand{\vcDim}{\text{VCdim}}
\newcommand{\lDim}{\text{Ldim}}
\newcommand{\lDimH}{\lDim(\cH)}
\newcommand{\mclDim}{\text{mcLdim}}
\newcommand{\mclDimH}{\mclDim(\cH)}
\newcommand{\lossf}{\mathcal{L}}
\newcommand{\regretf}{\mathcal{R}}
\newcommand{\vech}{\mathbf{h}}
\newcommand{\vecD}{\mathbf{D}}
\newcommand{\vecx}{\mathbf{x}}
\newcommand{\vecz}{\mathbf{z}}
\newcommand{\bigO}{\mathcal{O}}
\newcommand{\cF}{\mathcal{F}}
\newtheorem{theorem}{Theorem}[section]
\newtheorem{lemma}[theorem]{Lemma}
\newtheorem{definition}[theorem]{Definition}
\newtheorem{remark}{Remark}
\begin{document}

\title{Quantum Learning Theory Beyond Batch Binary Classification}

\author{Preetham Mohan}
\affiliation{Department of Mathematics, University of Michigan}
\orcid{0000-0003-3401-2985}
\email{preetham@umich.edu}

\author{Ambuj Tewari}
\affiliation{Department of Statistics, University of Michigan}
\affiliation{Department of Electrical Engineering and Computer Science, University of Michigan}
\orcid{0000-0001-6969-7844}
\email{tewaria@umich.edu}

\maketitle

\begin{abstract}
Arunachalam and de Wolf \cite{arunachalam2018optimal} showed that the sample complexity of quantum batch learning of boolean functions, in the realizable and agnostic settings, has the \textit{same form and order} as the corresponding classical sample complexities. In this paper, we extend this, ostensibly surprising, message to batch multiclass learning, online boolean learning, and online multiclass learning.  For our online learning results, we first consider an adaptive adversary variant of the classical model of Dawid and Tewari \cite{dawid2022learnability}. Then, we introduce the first (to the best of our knowledge) model of online learning with quantum examples.
\end{abstract}

\section{Introduction}\label{sec:intro}

Bshouty and Jackson \cite{bshouty1995learning} provided a quantum extension of the PAC setting  by formalizing what it means to learn from quantum examples. This inspired a line of work \cite{servedio2004equivalences, atici2005improved, zhang2010improved}, culminating in the work of Arunachalam and de Wolf \cite{arunachalam2018optimal}, that has provided {\em sample complexity} bounds for quantum batch learning of boolean functions. Arunachalam and de Wolf \cite{arunachalam2018optimal} helped crystallize the following message:
\begin{enumerate}[noitemsep]
    \item No new combinatorial dimension is needed to characterize quantum batch learnability of boolean functions--the VC dimension continues to do so.
    \item There is \textit{at most a constant} sample complexity advantage for quantum batch learning of boolean functions, in both the realizable and agnostic settings, as compared to the corresponding classical sample complexities.
\end{enumerate}
In this paper, we show that this message continues to hold in three other learning settings: batch learning of multiclass functions, online learning of boolean functions, and online learning of multiclass functions.

Our motivation for considering quantum batch learning of multiclass functions is an open question posed in \cite{arunachalam2018optimal} which asks ``what is the quantum sample complexity for learning concepts whose range is $[k]$ rather than $\{0,1\}$, for some $k > 2$?'' We resolve this question for $2 < k < \infty$ (see Section \ref{sec:quantum-batch}). In classical multiclass batch learning, an approach to establish the lower and upper sample complexity bounds \cite{daniely2015multiclass} is to proceed via a reduction to the binary case, with an appeal to the definition of Natarajan dimension. While classically straightforward, extending such a proof approach to establish sample complexity bounds for quantum multiclass batch learning involves manipulating quantum examples, which has to be done with utmost care (see Section \ref{sec:quantum-PAC-k-lb}).

Unlike the batch setting, quantum online learning of classical functions, to the best of our knowledge, has no predefined model. One possible explanation is that we need, as an intermediary, a new classical online learning model (see Sections \ref{model:classical-adversary-provides-a-distribution}, \ref{model:classical-adversary-provides-a-distribution-agnostic}) where, at each round, the adversary provides a {\em distribution} over the example (input-label) space instead of a single example. With this new classical model, and the definition of a quantum example, a model for online learning in the quantum setting arises as a natural extension (see Figure \ref{figure:generalization-classical-quantum}).

\begin{figure}[ht]
\centering
\begin{tikzpicture}
\node[draw] (A1) at (-4,-1.5) {Classical Batch Learning};
\node[draw] (B1) at (4,-1.5) {Classical Online Learning};
\node[draw] (A2) at (-4,0) {Definition of a quantum example};
\node[draw, align=center, text width=2.3in] (B2) at (4,0) {Adversary-provides-a-distribution Model (Sections \ref{model:classical-adversary-provides-a-distribution}, \ref{model:classical-adversary-provides-a-distribution-agnostic})};
\node[draw] (A3) at (-4,1.5) {Quantum Batch Learning};
\node[draw] (B3) at (4,1.5) {Quantum Online Learning};
\draw[->]
  (A1) edge (A2) (A2) edge (A3) (B1) edge (B2);
\draw[-] (B2) -- (A2);
\draw[-] (0,0) -- (0,1.5);
\draw[->] (0,1.5) -- (B3);
\end{tikzpicture}
\caption{Mapping of the tools necessary for generalizations of learning paradigms from classical to quantum.} \label{figure:generalization-classical-quantum}
\end{figure}
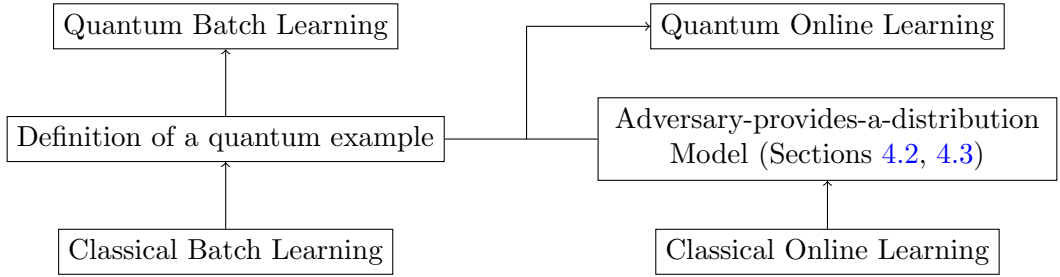

\subsection{Our Contributions}

In Tables \ref{table:results-in-batch-LT} and \ref{table:results-in-online-LT}, we provide a concise overview of existing results and highlight our contributions in batch and online learning for binary and multiclass classification across realizable and agnostic settings in both classical and quantum paradigms. Our contributions include:
\begin{itemize}[noitemsep]
    \item establishing lower and upper sample complexity bounds for quantum batch multiclass classification in the realizable and agnostic settings,
    \item proposing a new classical online learning model, which is an adaptive adversary variant of an existing classical online learning model~\cite{dawid2022learnability},
    \item proposing a quantum online learning model, as a natural generalization of our proposed classical online learning model,
    \item establishing tight expected regret bounds for quantum online binary classification in the realizable and agnostic settings,
    \item establishing a tight expected regret bound for quantum online multiclass classification in the realizable setting, and
    \item establishing lower and upper expected regret bounds for quantum online multiclass classification in the agnostic setting.
\end{itemize}

\paragraph{Notes on Tables \ref{table:results-in-batch-LT} and \ref{table:results-in-online-LT}}

\begin{itemize}[noitemsep]
    \item In all cases, we state known results that exhibit the tightest dependence on the combinatorial parameters that characterize learning in the respective settings.
    \item In the batch multiclass case, we work with Natarajan dimension, instead of DS dimension which was shown to characterize classical batch multiclass learning (including the $k \to \infty$ case) recently \cite{brukhim2022characterization}. We defer the resolution of the quantum sample complexity in the $k \to \infty$ case to future work.
    \item In the batch multiclass realizable case, there exists an upper bound with a tighter dependence on $\epsilon$ (but looser on $\nDim(\cH)$) for both classical and quantum cases (see Section \ref{sec:classical-quantum-multiclass-PAC-ub-proof}).
    \item In the (canonical) classical online multiclass agnostic case, the $\tilde{\bigO}(\sqrt{\mclDimH T})$ bound hides a $\sqrt{\log\Big(\frac{T}{\mclDimH}\Big)}$ factor \cite{hanneke2023multiclass}.
    \item The definition of loss/regret differs between the canonical classical adversary-provides-an-input model (in Section \ref{model:classical-adversary-provides-an-input}) and both the classical adversary-provides-a-distribution model (in Sections \ref{model:classical-adversary-provides-a-distribution}, \ref{model:classical-adversary-provides-a-distribution-agnostic}) and the quantum online model (in Section \ref{sec:quantum_online}). Specifically, the former employs the indicator loss (mistake model), whereas the latter two involve probabilistic losses.
\end{itemize}

\begin{table}[ht]
    \centering
    \begin{tabular}{c|c|c|c}
    & & Classical & Quantum \\
    \hline
    \multirow{2}{*}{Boolean} & Realizable & $\Theta\Big( \frac{d + \log(\frac{1}{\delta})}{\epsilon}\Big)$ \footnotesize{\cite{blumer1989learnability, hanneke2016optimal}} & $\Theta\Big( \frac{d + \log(\frac{1}{\delta})}{\epsilon}\Big)$ \footnotesize{\cite{arunachalam2018optimal}}\\
    \cline{2-4}
    & Agnostic & $\Theta\Big( \frac{d + \log(\frac{1}{\delta})}{\epsilon^2}\Big)$ \footnotesize{\cite{kearns1992toward, talagrand1994sharper}} & $\Theta\Big( \frac{d + \log(\frac{1}{\delta})}{\epsilon^2}\Big)$ \footnotesize{\cite{arunachalam2018optimal}}\\
    \cline{1-4}
    \multirow{4}{*}{\raisebox{-1.5ex}{Multiclass}} & \multirow{2}{*}{\raisebox{-0.75ex}{Realizable}} & $\Omega\Big( \frac{d + \log(\frac{1}{\delta})}{\epsilon}\Big)$ \footnotesize{\cite{natarajan1989learning}} & \cellcolor{gray!10} $\Omega\Big( \frac{d + \log(\frac{1}{\delta})}{\epsilon}\Big)$  \footnotesize{(Thm. \ref{thm:quantum-PAC-k})} \\
    & & $\bigO\Big( \frac{d\log(k)\log(\frac{1}{\epsilon}) + \log(\frac{1}{\delta})}{\epsilon} \Big)$ \footnotesize{\cite{daniely2015multiclass}} & \cellcolor{gray!10} $\bigO\Big( \frac{d\log(k)\log(\frac{1}{\epsilon}) + \log(\frac{1}{\delta})}{\epsilon} \Big)$ \footnotesize{(Thm. \ref{thm:quantum-PAC-k-ub})} \\
    \cline{2-4}
    & \multirow{2}{*}{\raisebox{-0.75ex}{Agnostic}} & $\Omega\Big( \frac{d + \log(\frac{1}{\delta})}{\epsilon^2}\Big)$ \footnotesize{\cite{bendavid1995characterizations}} & \cellcolor{gray!10} $\Omega\Big( \frac{d + \log(\frac{1}{\delta})}{\epsilon^2}\Big)$ \footnotesize{(Thm. \ref{thm:quantum-PAC-k})} \\
    & & $\bigO\Big( \frac{d\log(k) + \log(\frac{1}{\delta})}{\epsilon^2} \Big)$ \footnotesize{\cite{bendavid1995characterizations}} & \cellcolor{gray!10} $\bigO\Big( \frac{d\log(k) + \log(\frac{1}{\delta})}{\epsilon^2} \Big)$ \footnotesize{(Thm. \ref{thm:quantum-PAC-k-ub})} \\
    \hline
    \end{tabular}
    \caption{An overview of sample complexity results for \textbf{batch} learning in classical and quantum paradigms. Our novel contributions are presented in boxes shaded gray. $d$ denotes the VC dimension for Boolean cases ($d := \vcDim(\cH)$), and the Natarajan dimension for Multiclass cases ($d := \nDim(\cH)$).}
    \label{table:results-in-batch-LT}
\end{table}

\begin{table}[ht]
    \centering
    \begin{tabular}{c@{\hskip 2pt}|c@{\hskip 2pt}|c@{\hskip 2pt}|c@{\hskip 2pt}|c}
    & & Classical & \cellcolor{gray!10} Classical & \cellcolor{gray!10} \\
    & & (Input-based) & \cellcolor{gray!10} (Distribution-based) & \cellcolor{gray!10} \multirow{-2}{*}{Quantum} \\
    \hline
    \multirow{2}{*}{Bool.} & Real. & $\Theta(d)$ \footnotesize{\cite{littlestone1988learning}} & \cellcolor{gray!10} $\Theta(d)$ \footnotesize{(Thms. \ref{thm:loss-upper-bdd-dist}, \ref{thm:loss-lower-bdd-dist})} & \cellcolor{gray!10} $\Theta(d)$ \footnotesize{(Thms. \ref{thm:quantum-online-boolean-lb}, \ref{thm:quantum-online-boolean-ub})} \\
    \cline{2-5}
    & Agn. & $\Theta(\sqrt{dT})$ \footnotesize{\cite{ben2009agnostic, alon2021adversarial}} & \cellcolor{gray!10} $\Theta(\sqrt{dT})$ \footnotesize{(Thms. \ref{thm:regret-upper-bdd-dist}, \ref{thm:regret-lower-bdd-dist})} & \cellcolor{gray!10} $\Theta(\sqrt{dT})$ \footnotesize{(Thms. \ref{thm:quantum-online-boolean-lb}, \ref{thm:quantum-online-boolean-ub})} \\
    \cline{1-5}
    \multirow{3}{*}{Multi.} & Real. & $\Theta(d)$ \footnotesize{\cite{daniely2015multiclass}} & \cellcolor{gray!10} $\Theta(d)$ \footnotesize{(Thms. \ref{thm:loss-upper-bdd-dist-mc}, \ref{thm:loss-lower-bdd-dist-mc})} & \cellcolor{gray!10} $\Theta(d)$ \footnotesize{(Thms. \ref{thm:quantum-online-multiclass-lb}, \ref{thm:quantum-online-multiclass-ub})} \\
    \cline{2-5}
    & \multirow{2}{*}{Agn.} & $\Omega({\sqrt{dT}})$ \footnotesize{\cite{daniely2015multiclass}} & \cellcolor{gray!10} $\Omega({\sqrt{dT}})$ \footnotesize{(Thm. \ref{thm:regret-lower-bdd-dist-mc})} & \cellcolor{gray!10} $\Omega({\sqrt{dT}})$ \footnotesize{(Thm. \ref{thm:quantum-online-multiclass-lb})} \\
    & & $\tilde{\bigO}(\sqrt{dT})$ \footnotesize{\cite{hanneke2023multiclass}} & \cellcolor{gray!10} $\bigO(\sqrt{dT \log(Tk)})$ \footnotesize{(Thm. \ref{thm:regret-upper-bdd-dist-mc})} & \cellcolor{gray!10} $\bigO(\sqrt{dT \log(Tk)})$ \footnotesize{(Thm. \ref{thm:quantum-online-multiclass-ub})} \\
    \hline
    \end{tabular}
    \caption{An overview of expected regret bounds for \textbf{online} learning in the canonical classical (adversary-provides-an-input), classical \textit{adversary-provides-a-distribution}, and quantum paradigms. Our novel contributions are presented in boxes shaded gray. $d$ denotes the Littestone dimension for Boolean cases ($d := \lDimH$), and the multiclass Littlestone dimension for Multiclass cases ($d := \mclDimH$).}
    \label{table:results-in-online-LT}
\end{table}

\subsection{Related Works}

\paragraph{Situating Our Work in the Larger Context.} In the survey on the complexity of learning quantum states \cite{anshu2024survey}, the authors identify four key avenues of current research organized into a $2 \times 2$ framework: (1) learning \textbf{all states} versus \textbf{some subclasses of states}, and (2) learning with \textbf{strong requirements} (e.g., up to small trace distance) versus \textbf{weaker requirements} (e.g., PAC learning, statistical query learning). One of these avenues, concerning ``learning classical functions encoded as [quantum] states,'' aligns with the \textit{learning some subclasses of states with weaker learning requirements} category, and it is here that the models introduced in this paper contribute to the ongoing exploration of this research landscape.

\paragraph{Batch Learning.}

Quantum learning of classical functions in the batch setting has has predominantly focused on learning Boolean functions (see \cite{arunachalam2017guest} for a survey). In our work, the multiclass extension in the batch setting (Section \ref{sec:multiclass-classification}) is inspired by an open question of \cite{arunachalam2018optimal}. The lower bound (Section \ref{sec:quantum-PAC-k-lb}) in this extension is made possible due to the presence of an explicit quantum circuit (see Figure \ref{fig:qcircuit-mc-lb}) that allows for a \textit{black box}-style reduction of the quantum multiclass case to the quantum binary classification case, enabling the existing quantum batch binary lower bound (and, techniques therewith, see Section \ref{sec:quantum-PAC-2} for a detailed treatment) in \cite{arunachalam2018optimal} to yield the desired multiclass bound. This reduction is analogous to the establishment of the lower bound in the classical batch multiclass setting (as presented in Theorem 5 of \cite{daniely2015multiclass}). While the batch classification model in this paper assumes the learner has access to copies of quantum states $\ket{\psi_c}$ (i.e., an \textit{example oracle}), with no (i.e., at most a constant) advantage over classical batch models, Grover's algorithm -- along with its quadratic quantum sample complexity advantage -- has been successfully adapted in \cite{salmon2023provable} to batch \textit{binary} classification in a stronger learning model where the learner has access to a quantum circuit $Q_c$ that generates a quantum sample $\ket{\psi_c}$ (i.e., a \textit{unitary oracle}\footnote{The sample complexity here refers to the number of calls to the unitary oracle, providing access to both $Q_c$ and its Hermitian conjugate $Q_c^\dagger$}).

\paragraph{Proper vs. Improper Learning.}

In this work, we do not explicitly address whether learning is proper or improper in the various settings. However, since most of our proofs involve reductions to the corresponding classical settings, the nature of the algorithms that provide tight bounds in those settings remains unchanged in our cases. The separation between proper and improper quantum learning has garnered significant interest in the community \cite{arunachalam2020quantum, nayak2024proper}. In particular, this separation has been studied in the context of quantum extensions of the coupon collector problem, which provides a classical example of such a separation. Notably, \cite{nayak2024proper} established a provable separation between proper and improper learning in the quantum setting using a ``padded'' variant of the quantum coupon collector problem. On the classical side, \cite{bousquet2020proper} provides a full characterization of when proper learning achieves optimal sample complexity, based on a combinatorial parameter known as the dual Helly number. To the best of our knowledge, an analogous characterization for quantum proper learning remains an open question.

\paragraph{Online Learning and Adversary-provides-a-distribution.}

The online \textit{adversary-provides-a-distribution} model (Section \ref{model:classical-adversary-provides-a-distribution}) has analogues in other works. For example, \cite{haghtalab2020smoothed} and \cite{block2022smoothed} consider a setting where an adaptive adversary provides a distribution. Their setup is designed for smoothed learning, where at each time step, the adversary provides a sample drawn from their chosen distribution. However, their framework assumes a full-information setting, where the learner observes all losses, whereas in our model, the learner only receives partial feedback. Another related connection can be found in the online game introduced in \cite{lugosi2023online} to analyze batch generalization error. While their setting, like ours, involves partial information on losses (and, therefore, regret), the probabilistic aspect of their model arises from the learner selecting a distribution over a hypothesis class, rather than the adversary providing one.

\subsection{Organization}

The paper is organized as follows. In Section \ref{sec:preliminary}, we present preliminaries, including notation, quantum and probability theory basics, and batch learning frameworks. Section \ref{sec:quantum-batch} contains one of our main results, addressing the quantum sample complexity in the batch multiclass setting. Moving on to Section \ref{sec:classical-online-learning}, we revisit the canonical online model (Section \ref{model:classical-adversary-provides-an-input}), followed by the introduction and presentation of results for the classical adversary-provides-a-distribution model in both realizable (Section \ref{model:classical-adversary-provides-a-distribution}) and agnostic (Section \ref{model:classical-adversary-provides-a-distribution-agnostic}) settings for both binary and multiclass (Section \ref{sec:classical-adversary-provides-a-distribution-mc}) classification. This model serves as an intermediary for transitioning from the canonical classical online model to the quantum online model. In Section \ref{sec:quantum_online}, we introduce the quantum online learning model and summarize results within the established framework. The paper concludes with a discussion and reflection on the obtained results in Sections \ref{sec:takeaways} and \ref{sec:conclusion}, along with some open questions for future exploration.

\section{Preliminaries}\label{sec:preliminary}

In this section, we introduce the key concepts and notation used throughout the paper. First, we briefly review the fundamentals of quantum computing. Next, we present essential probability theory concepts needed for the proofs in Section \ref{sec:classical-online-learning}. Finally, we outline the batch learning frameworks in both classical and quantum settings, setting the stage for Section \ref{sec:quantum-batch}.

\subsection{Notation}\label{sec:notation}

In the bra-ket (Dirac) notation, a ket, $\ket{x}$, denotes a column vector in a complex vector space with an inner product (i.e., a Hilbert space). It is used primarily in the context of describing the state of a quantum system (e.g., see Definition \ref{def:qubit}). A bra $\bra{\cdot}$ is the dual of the ket, in that $\bra{x} = \ket{x}^\dagger$, where the $\dagger$ operator denotes the conjugate transpose. Typically, the bra notation is used for operators $\bra{M}$ (e.g. measurement operators) acting on a ket. This notation lends itself naturally to the notion of inner product $\braket{x|x} = \|x\|^2$, and matrix-vector multiplication $\braket{M|x}$. Furthermore, note that $\ket{x,y}$ denotes the tensor product $\ket{x} \otimes \ket{y}$, where $\otimes$ denotes the standard tensor product of two vector spaces. The comma may be omitted, and we have numerous equivalent notations for the tensor product: e.g., $\ket{0^2} = \ket{00} = \ket{0,0} = \ket{0}\ket{0} = \ket{0} \otimes \ket{0} = \ket{0}^{\otimes 2}$.

\subsection{Quantum Basics}

Analogous to how a classical bit (\textit{bit}) is a unit of classical information, a quantum bit (\textit{qubit}) is a unit of quantum information. The difference between the two is best illustrated by considering how each is realized. A bit is realized via well separated values of a physical property of a system (e.g., voltage across an element in an electric circuit). If the value is higher than a certain threshold, the bit assumes the value 1. Otherwise, it assumes the value 0. Thus, a bit carries the information equivalent of its namesake, a binary digit. A qubit, on the other hand, is realized as a two-level quantum system; e.g., as the spin (up, down) of an electron, the polarization (horizontal, vertical) of a photon, or the discrete energy levels (ground, excited) of an ion. Consequently, it is governed by the postulates of quantum mechanics \cite{nielsen2010quantum}, as detailed in the following paragraphs.

The \textit{state space} of a qubit is a 2-dimensional complex vector space, denoted as $\mathbb{C}^2$. The definition of a qubit as a \textit{state vector} within this space is presented in the following definition.

\begin{definition}[Qubit] \label{def:qubit}
    A single (isolated) qubit is described by a state vector $\ket{\psi}$, which is a unit vector in the state space $\mathbb{C}^2$. Mathematically,
    \[
    \ket{\psi} = \alpha_0 \ket{0} + \alpha_1 \ket{1}, \quad \alpha_0, \alpha_1 \in \mathbb{C}, \quad |\alpha_0|^2 + |\alpha_1|^2 = 1,
    \]
    where $\ket{0} = \begin{bmatrix} 1\\ 0 \end{bmatrix}$ and $\ket{1} = \begin{bmatrix} 0\\ 1 \end{bmatrix}$ are basis vectors for the state space.
\end{definition}

So, although we have two basis states (much as we did for the classical bit), the qubit is allowed to be in a (complex) superposition of the two, whereas a classical bit must deterministically be in one of the basis states. Additionally, as our learning examples (refer to (\ref{example:quantum-realizable}), (\ref{example:quantum-agnostic})) will involve multiple qubits, it is important to note that the state space of the composite system, comprising many qubits, is the tensor product of the state spaces of its components (i.e., the individual qubits). The joint state of the composite system formed by $n$ qubits, each in state $\ket{\psi_i}, \; i \in \{1,\ldots,n\}$, is given by, $\ket{\psi_1} \otimes \ket{\psi_2} \otimes \cdots \otimes \ket{\psi_n} \in \mathbb{C}^{2^n}$. Now, let us express the state vector for multi-qubit system in terms of the standard basis elements.

\begin{definition}[Multi-qubit system]
The state vector, $\ket{\Psi} \in \mathbb{C}^{2^n}$, describing the system of $n$ qubits can be expressed in terms of the standard basis elements $\{\ket{b} = \ket{b_1} \otimes \cdots \otimes \ket{b_n} | \, b=(b_1,\ldots,b_n) \in \{0,1\}^n \}$ as follows:
\[
    \ket{\Psi} = \sum_{b \in \{0,1\}^n} \alpha_b \ket{b},
\]
where $\alpha_b = \braket{\Psi|b} \in \mathbb{C}$, $\sum_b |\alpha_b|^2 = 1$.
\end{definition}

In contrast, the joint state of $n$ classical bits is described by their Cartesian product. This essential distinction between Cartesian and tensor products is precisely the phenomenon of quantum entanglement, namely the existence of (pure) states of a composite system that are not product states of its parts. Quantum entanglement, alongside superposition, lies at the heart of intrinsic advantages of quantum computing.

Any manipulation of a quantum system is confined to unitary evolution. In the context of computation, this implies that \textit{all quantum gates are unitary operators}, restricting their application to reversible computations. An avenue for irreversible computation, and the only way to obtain classical outputs in the quantum realm, is the notion of a \textit{measurement}.

\begin{definition}[Measurement]\label{def:measurement}
    Quantum measurements are described by a collection $\{M_m\}$ of measurement operators acting on the state space of the system. The index $m$ denotes the possible classical outcomes of the measurement. If the quantum system is in the state $\ket{\psi}$ before measurement, then the probability that result $m$ occurs is given by
    $p(m) = \bra{\psi}M_m^\dagger M_m \ket{\psi}$,
    and the state of the system after the measurement, if $m^\star$ is observed, ``collapses to''
    $M_{m^\star} \ket{\psi} / \sqrt{p(m^\star)}$.
    To ensure the conservation of total probability, $\sum_m M_m^\dagger M_m = I$ is satisfied.
\end{definition}

\noindent Here are a couple of examples to illustrate the above definition:
\begin{itemize}[noitemsep,nolistsep]
    \item A measurement in the standard basis is implemented by measurement operators $M_0 = \ket{0}\bra{0}$ and $M_1 = \ket{1}\bra{1}$.
    \item A measurement in the standard basis of the state $\ket{\Psi} = \sum_{b \in \{0,1\}^n} \alpha_b \ket{b}$ yields the classical outcome $b$ with probability $|\alpha_b|^2$.
\end{itemize}

\subsection{Probability Theory Basics}

Let $\mathbf{X} := (X_t)_{t \geq 1}$ be a sequence of random variables defined on a probability space $(\Omega, \mathcal{F}, \mathbb{P})$, where each $X_t$ represents the outcome of an experiment at time $t$. Associated with this sequence is a filtration $\mathbb{F} := (\mathcal{F}_t)_{t \geq 1}$, an increasing family of $\sigma$-algebras such that $\mathcal{F}_t$ captures the information available up to time $t$. Formally, $\mathcal{F}_1 \subseteq \mathcal{F}_2 \subseteq \cdots \subseteq \mathbb{F}$, where $\mathcal{F}_t$ contains events determined by $\mathbf{X}|_t = (X_1, X_2, \dots, X_t)$.

\begin{definition}[Martingale]\label{def:martingale}
    A sequence of random variables $\mathbf{M} := (M_t)_{t \geq 1}$ is said to be a \textit{martingale} with respect to the filtration $\mathbb{F} := (\mathcal{F}_t)_{t \geq 1}$ if it satisfies the following conditions:
    \begin{enumerate}[noitemsep]
        \item Adaptation: $M_t$ is $\mathcal{F}_t$-measurable for all $t \geq 1$ (i.e., we say $\mathbf{M}$ is adapted to $\mathbb{F}$),
        \item Integrability: $\mathbb{E}[|M_t|] < \infty$ for all $t \geq 1$,
        \item Martingale Property: $\mathbb{E}[M_{t+1} | \mathcal{F}_t] = M_t$ for all $t \geq 1$.
    \end{enumerate}
\end{definition}

\begin{definition}[Martingale Difference Sequence]\label{def:MDS}
    Given a martingale $\mathbf{M} := \{M_t\}_{t \geq 1}$, the corresponding \textit{martingale difference sequence} is defined as $\mathbf{D} := \{D_{t+1}\}_{t \geq 1}$, where $D_{t+1} = M_{t+1} - M_t$. By the martingale property, the sequence $\mathbf{D}$ satisfies $\mathbb{E}[D_{t+1} | \mathcal{F}_t] = 0$ for all $t \geq 1$, indicating that each increment $D_{t+1}$ has conditional mean zero given the information up to time $t$.

    In general, any sequence of random variables $\mathbf{X} := \{X_t\}_{t \geq 1}$ is a martingale difference sequence with respect to the filtration $\mathbb{F} := (\cF_{t})_{t \geq 1}$ if it satisfies the following conditions:
    \begin{enumerate}[noitemsep]
        \item Adaptation: $X_t$ is $\mathcal{F}_t$-measurable for all $t \geq 1$ (i.e. $\mathbf{X}$ is adapted to $\mathbb{F}$),
        \item Integrability: $\mathbb{E}[|X_t|] < \infty$ for all $t \geq 1$,
        \item Conditional Mean-Zero Property: $\mathbb{E}[X_t | \mathcal{F}_{t-1}] = 0$ for all $t \geq 1$\footnote{By convention, we treat the ``conditioning'' on $\cF_0$ as an unconditional expectation, i.e., $\mathbb{E}[X_1]=0$.}.
    \end{enumerate}
\end{definition}

\subsection{PAC Learning Framework}

In the classical PAC (Probably Approximately Correct) learning model \cite{valiant1984theory}, a learner is provided oracle access to samples $(x, y)$, where $x$ is sampled from some unknown distribution $D$ on $\cX$ and $y = h^\star(x)$, for some \textit{target} hypothesis $h^\star : \cX \to \cY$. We assume that $h^\star \in \cH$, where $\cH$ is a predefined hypothesis class, i.e., the learner has prior knowledge of $\cH$. The goal of the learning problem is to find\footnote{Note that $h$ need not necessarily belong to $\cH$. If it does, the learner is called \textit{proper}. If not, the learner is \textit{improper}. \label{footnote:proper-improper}} $h: \cX \to \cY$ such that the generalization error, given by the loss function $\cL(h, D, h^\star) = \Prob_{x \sim D} (h(x) \neq h^\star(x))$, is minimized.

\begin{definition}[PAC learner]\label{def:PAC-learner}
    An algorithm $\cA$ is an $(\epsilon,\delta)$-PAC learner for a hypothesis class $\cH$ if, for any unknown distribution $D$ and for all $h^\star \in \cH$, $\cA$ takes in $m$ pairs of labeled instances, i.e., $\{(x_i,h^\star(x_i))\}_{i=1}^m$, each drawn i.i.d. from $D$, and outputs a hypothesis $h$ such that $\Prob[{\cL(h, D, h^\star) \leq \epsilon}] \geq 1 - \delta$, where the outer probability is over the sequence of examples and the learner’s internal randomness.
\end{definition}

Indeed, an $(\epsilon,\delta)$-PAC learner outputs a hypothesis that is, with high probability ($\geq 1 - \delta$), approximately correct ($\cL \leq \epsilon$). A hypothesis class $\cH$ is \textit{PAC-learnable} if there exists an algorithm $\cA$ that is an $(\epsilon,\delta)$-PAC learner for $\cH$.
When $\cY = \{0,1\}$, we are in the setting of binary classification. To express the sample complexity of learning boolean function classes later on, we define below a key combinatorial parameter known as the VC dimension.

\begin{definition}[VC dimension]\label{def:VC-dimension}
    Given a hypothesis class $\cH = \{h:\cX \to \{0,1\}\}$, a set $S = \{s_1,\ldots,s_t\}\subseteq \cX$ is said to be shattered by $\cH$ if, for every labeling $\ell \in \{0,1\}^t$, there exists an $h \in \cH$ such that $(h(s_1),h(s_2),\ldots,h(s_t)) = \ell$. The VC dimension of $\cH$, $\vcDim(\cH)$, is the size of the largest set $S$ that is shattered by $\cH$.
\end{definition}

\subsection{Agnostic Learning Framework}

In the PAC learning framework, we worked with the \textit{realizability assumption}, namely that $h^\star \in \cH$. If we omit this rather strong assumption, we are able to generalize the PAC learning framework to the agnostic learning framework \cite{kearns1992toward}.
Here, a learner is provided with oracle access to samples $(x,y)$, sampled from some unknown distribution $D$ on $\cX \times \cY$. The learner has knowledge of a predefined hypothesis class $\cH$. The objective of the learning problem is to find$^{\ref{footnote:proper-improper}}$ $h: \cX \to \cY$ such that the \textit{regret}
\[
\cR(h,D) = \Prob_{(x,y) \sim D} (h(x) \neq y) - \inf_{h_c \in \cH} \Prob_{(x,y) \sim D} (h_c(x) \neq y),
\]
is minimized. One can notice that if the labels happen to satisfy some $h^\star \in \cH$, $\cR \equiv \cL$.

\begin{definition}[Agnostic learner]\label{def:Agnostic-learner}
An algorithm $\cA$ is an $(\epsilon, \delta)$-agnostic learner for a hypothesis class $\cH$ if, for any unknown distribution $D$, $\cA$ takes in $m$ pairs of labeled instances, i.e., ${(x_i, y_i)}_{i=1}^m$, each drawn i.i.d. from $D$, and outputs a hypothesis $h$ such that $\Prob[{\cR(h, D) \leq \epsilon}] \geq 1 - \delta$, where the outer probability is over the sequence of examples and the learner’s internal randomness.
\end{definition}
    
\subsection{Quantum PAC and Agnostic Learning Frameworks} \label{framework:quantum-pac-and-agnostic-quantum-example}

In the quantum setting, the primary difference from the classical setting lies in how the examples are provided. In particular, in the PAC learning setup, a quantum example \cite{bshouty1995learning} takes the form
\begin{equation} \label{example:quantum-realizable}
\sum_{x \in \{0,1\}^n} \sqrt{D(x)} \ket{x,h^\star(x)},
\end{equation}
for some $h^\star \in \cH$, where $D: \{0,1\}^n \to [0,1]$ is a distribution over the instance space\footnote{Here, we have taken $\cX = \{0,1\}^n$ for convenience and ease of analysis. However, any finite $\cX$ could be mapped to this one, if needed.}, as before. This might appear slightly strange, as a single example seemingly contains information about \textit{all} possible classical examples. However, if we view it via the lens of measurement (see Definition \ref{def:measurement}), then it is clear that measuring a quantum example will provide the learner with a \textit{single} classical example $(x,h^\star(x))$ with probability $D(x)$, exactly how it was in the classical PAC learning setup. While we have argued that the quantum example is a natural generalization of the classical example, the question still remains as to whether any sample complexity advantages in the quantum realm arise from the intrinsic description of a quantum example or from the quantum algorithm used or from both.

In the agnostic learning setting, a quantum example takes the form
\begin{equation} \label{example:quantum-agnostic}
\sum_{(x,y) \in \{0,1\}^{n+1}} \sqrt{D(x,y)} \ket{x,y},
\end{equation}
where, now, $D: \{0,1\}^{n+1} \to [0,1]$. These examples, like in the quantum PAC setting above, are typically \textit{prepared} by acting on the all-zero state $\ket{0^n,0}$ via an appropriate quantum circuit.

Given quantum examples (instead of classical examples), Definitions \ref{def:PAC-learner} and \ref{def:Agnostic-learner} otherwise stay exactly the same in the quantum setting.

\section{Quantum Batch Learning}\label{sec:quantum-batch}

Under the quantum (batch) learning frameworks outlined in Section \ref{framework:quantum-pac-and-agnostic-quantum-example}, we investigate the sample complexity of batch learning a hypothesis class $\cH$. Specifically, we address the question of how many copies of quantum examples, as given in (\ref{example:quantum-realizable}) (resp. (\ref{example:quantum-agnostic})), are required to $(\epsilon, \delta)$-quantum PAC (resp. quantum agnostic) learn $\cH$.

\subsection{Binary Classification}\label{sec:quantum-PAC-2}

In the binary classification setting, this question has been conclusively answered, and we reproduce the corresponding theorem below.

\begin{theorem}[Sample complexity bounds\footnote{The upper bounds are obtained trivially via a measure-and-learn-classically quantum learner, whereas matching lower bounds are provided by \cite{arunachalam2018optimal}.} for quantum batch binary classification; Theorems 23 and 25 in \cite{arunachalam2018optimal}] \label{thm:quantum-PAC-2}
    Let $\cH \subseteq \{0,1\}^\cX$. For every $\delta \in (0,1/2)$ and $\epsilon \in (0,1/20)$, the sample complexity of an $(\epsilon, \delta)$-quantum PAC learner (and, respectively, an $(\epsilon, \delta)$-quantum agnostic learner) for the hypothesis class $\cH$ is given by:
    \[
        m^{\text{PAC}} = \Theta\Bigg( \frac{\vcDim(\cH) + \log(\frac{1}{\delta})}{\epsilon} \Bigg), \;\; \text{and} \;\;\; m^{\text{agnostic}} = \Theta\Bigg( \frac{\vcDim(\cH) + \log(\frac{1}{\delta})}{\epsilon^2} \Bigg).
    \]
\end{theorem}

Arunachalam and de Wolf \cite{arunachalam2018optimal} established the \textit{optimal} lower bounds via quantum state identification, employing ideas from Fourier analysis to assess the performance of the Pretty Good Measurement. They also derived \textit{near-optimal} information-theoretic lower bounds, which are off by a factor of $\frac{1}{\log(\vcDim(\cH)/\epsilon)}$ in the term involving $\vcDim(\cH)$ in both the PAC and agnostic settings. Their information-theoretic approach has since been refined to provide the \textit{optimal} lower bounds \cite{hadiashar2023optimal}.

Having resolved the quantum sample complexity in the (batch) binary classification setting, \cite{arunachalam2018optimal} also presented an open question regarding the quantum sample complexity in the (batch) \textit{multiclass} classification setting, i.e., when $\cH \subseteq \cY^\cX$ with $|\cY| = k > 2$.

\subsection{Multiclass Classification}\label{sec:multiclass-classification}

In this subsection, we provide an answer to the aforementioned question. To express sample complexity results in this setting, we first define the combinatorial parameter, Natarajan dimension ($\nDim(\cdot)$), which serves as a generalization of the VC dimension to the multiclass setting.

\begin{definition}[Natarajan dimension]
    Given a hypothesis class $\cH = \{h:\cX \to [k]\}$, a set $S = \{s_1,\ldots,s_t\}\subseteq \cX$ is said to be N-shattered by $\cH$ if there exist two ``witness'' functions $f_0, f_1: S \to [k]$ such that:
    \begin{itemize}[noitemsep]
        \item For every $x \in S$, $f_0(x) \neq f_1(x)$.
        \item For every $R \subseteq S$, there exists a function $h \in \cH$ such that
        \[
        \forall x \in R, h(x) = f_0(x) \; \text{and} \; \; \forall x \in S \setminus R, h(x) = f_1(x).
        \]
    \end{itemize}
    The Natarajan dimension of $\cH$, $\nDim(\cH)$, is the size of the largest set $S$ that is N-shattered by $\cH$.
\end{definition}

\subsubsection{Lower Bounds} \label{sec:quantum-PAC-k-lb}

\begin{theorem}[Sample complexity lower bounds for quantum batch multiclass classification] \label{thm:quantum-PAC-k}
Let $\cH \subseteq \cY^\cX$, with $|\cY| = k > 2$. For every $\delta \in (0,1/2)$ and $\epsilon \in (0,1/20)$, the sample complexity of an $(\epsilon, \delta)$-quantum PAC learner (and, respectively, an $(\epsilon, \delta)$-quantum agnostic learner) for the hypothesis class $\cH$ is bounded below as follows:
\[
m^{\text{PAC}} = \Omega\Bigg( \frac{\nDim(\cH) + \log(\frac{1}{\delta})}{\epsilon} \Bigg), \;\; \text{and} \;\;\; m^{\text{agnostic}} = \Omega\Bigg( \frac{\nDim(\cH) + \log(\frac{1}{\delta})}{\epsilon^2} \Bigg).
\]
\end{theorem}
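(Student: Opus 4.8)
The plan is to prove both bounds by a reduction from quantum batch \emph{binary} classification, so that the binary quantum sample complexity lower bounds of \cite{arunachalam2018optimal} can be invoked as a black box. I would assume $\nDim(\cH) =: d$ is finite (the bound is vacuous otherwise) and fix a set $S = \{s_1,\dots,s_d\} \subseteq \cX$ that is $N$-shattered by $\cH$ with witnesses $f_0, f_1 : S \to \cY$. For each $R \subseteq S$ choose $h_R \in \cH$ with $h_R = f_0$ on $R$ and $h_R = f_1$ on $S \setminus R$, and pair it with the boolean function $b_R : S \to \{0,1\}$, $b_R(x) = \indf[x \in R]$; writing $\phi_x : \{0,1\} \hookrightarrow \cY$ for the injection $\phi_x(1) = f_0(x)$, $\phi_x(0) = f_1(x)$, we get $h_R(x) = \phi_x(b_R(x))$ on $S$. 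The boolean class $\mathcal{B} := \{b_R : R \subseteq S\}$ shatters the size-$d$ set $S$, so it falls under the binary lower bounds.

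Next I would turn an $(\epsilon,\delta)$-quantum PAC (resp. agnostic) learner $\cA$ for $\cH$ using $m$ quantum examples into one, $\cA'$, for $\mathcal{B}$ using $m$ quantum examples. Given a binary quantum example $\sum_x \sqrt{D(x)}\ket{x, b_R(x)}$ (resp. $\sum_{(x,y)} \sqrt{D(x,y)}\ket{x,y}$ with $D$ a distribution on $S \times \{0,1\}$, which covers the hard distributions of the binary bound), I would: (i) append $\lceil \log_2 |\cY| \rceil$ fresh qubits in $\ket{0}$; (ii) apply the unitary that, controlled on the instance register $x$ and the label qubit $c$, writes $\phi_x(c)$ into the new register — this extends to a genuine permutation of basis states because $|\cY| < \infty$; and (iii) uncompute the original label qubit back to $\ket{0}$, which is possible since $\phi_x$ is injective and hence $c$ is recoverable from $(x, \phi_x(c))$. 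Discarding that qubit leaves exactly a multiclass quantum example $\sum_x \sqrt{D(x)}\ket{x, h_R(x)}$ (resp. $\sum_{(x,y)} \sqrt{D(x,y)}\ket{x, \phi_x(y)}$, a legitimate agnostic example for the pushforward distribution $D'$ of $D$ under $(x,y) \mapsto (x,\phi_x(y))$). Feeding the $m$ converted examples to $\cA$ gives a hypothesis $h : \cX \to \cY$, and $\cA'$ returns the boolean hypothesis $b(x) := \indf[h(x) = f_0(x)]$.

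To finish I would verify the guarantee transfers. In the realizable case, $h(x) = h_R(x)$ forces $h(x) \in \{f_0(x), f_1(x)\}$ and then $b(x) = b_R(x)$ by a two-line case check, so $\{b \ne b_R\} \subseteq \{h \ne h_R\}$ and $\Prob_{x\sim D}[b(x)\ne b_R(x)] \le \Prob_{x\sim D}[h(x) \ne h_R(x)]$. In the agnostic case the same check gives $\Prob_{(x,y)\sim D}[b(x)\ne y] \le \Prob_{(x,z)\sim D'}[h(x)\ne z]$, while $h_R(x) = \phi_x(b_R(x))$ together with injectivity of $\phi_x$ gives $\Prob_{D'}[h_R(x)\ne z] = \Prob_D[b_R(x)\ne y]$, hence $\inf_{b_c\in\mathcal{B}}\Prob_D[b_c(x)\ne y] = \inf_R \Prob_{D'}[h_R(x)\ne z] \ge \inf_{h_c\in\cH}\Prob_{D'}[h_c(x)\ne z]$; subtracting shows the regret of $b$ is at most the regret of $h$. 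Either way the event ``$h$ is $\epsilon$-good'' is contained in ``$b$ is $\epsilon$-good'', so $\cA'$ is an $(\epsilon,\delta)$-learner for $\mathcal{B}$ with $m$ quantum examples. Applying the binary quantum lower bounds of \cite{arunachalam2018optimal} to $\mathcal{B}$ (which shatters $d$ points) then yields $m = \Omega((d + \log(1/\delta))/\epsilon)$, resp. $\Omega((d + \log(1/\delta))/\epsilon^2)$.

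The step I expect to be the main obstacle is (ii)--(iii): realizing the multiclass relabeling of a quantum example as an honest unitary and, above all, uncomputing the auxiliary label register so that the output state is \emph{precisely} in the prescribed quantum-example format and can be handed to $\cA$ — this is exactly the ``manipulate quantum examples with utmost care'' point raised in the introduction. A lesser subtlety is the agnostic bookkeeping: one must argue on the pushforward distribution $D'$, confirm that the comparator $\inf_{h_c\in\cH}$ over \emph{all} of $\cH$ (not just $\{h_R\}$) only helps, and check that the reduction leaves $\epsilon$ and $\delta$ untouched so the ranges $\delta \in (0,1/2)$, $\epsilon \in (0,1/10)$ required by the binary bounds are respected.
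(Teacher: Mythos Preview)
Your proposal is correct and follows essentially the same reduction as the paper: convert a quantum learner for $\cH$ into one for the boolean class $\{0,1\}^{[d]}$ by unitarily relabeling binary quantum examples into multiclass ones, with the uncomputation step hinging on the injectivity of $y \mapsto f_y(x)$ guaranteed by N-shattering, and then invoke the binary bounds of \cite{arunachalam2018optimal}. The paper spells out the circuit explicitly with \texttt{X}, \texttt{CNOT}, \texttt{TOFFOLI}, and oracles $U_{f_0},U_{f_1}$, while you give a cleaner high-level description and, conversely, carry out the agnostic regret bookkeeping (pushforward $D'$, comparator over all of $\cH$) that the paper only asserts ``proceeds identically''; these are differences of emphasis, not of substance.
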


At its core, the proof involves reducing the problem to the quantum binary case -- establishing that a learning algorithm for $\mathcal{H}$ implies a learning algorithm for $\mathcal{H}_d$, where $\vcDim(\mathcal{H}_d) = \nDim(\mathcal{H}) = d$. This, in turn, enables us to deduce a sample complexity lower bound for learning $\mathcal{H}$ based on the corresponding lower bound for learning $\mathcal{H}_d$. A key step in the reduction involves the following transformation of a quantum binary example\footnote{To maintain consistency with Section \ref{framework:quantum-pac-and-agnostic-quantum-example}, the input space $[d]$ can be identified with $\{0,1\}^{\lceil \log_2 d \rceil}$.} into a quantum multiclass example,
\begin{equation}\label{eq:quantum-binary-to-multiclass-example}
\sum_{x \in [d]} \sqrt{D(x)} \ket{x,y} \to \sum_{x \in [d]} \sqrt{D(x)} \ket{x,f_y(x)}, \; \text{where} \; y \in \{0,1\}, \; \text{and} \; f_0, f_1 : [d] \to [k].
\end{equation}

While in the corresponding classical reduction proof, converting $(x,y) \to (x,f_y(x))$ is entirely trivial with the knowledge of $x, y, f_0, f_1$, performing the transformation in (\ref{eq:quantum-binary-to-multiclass-example}) using only unitary operations (in a reversible manner) in the quantum realm involves delicate reasoning using an explicit quantum circuit. In particular, it is noteworthy as its existence hinges on the reversibility of the transformation $y \leftrightarrow f_y$, which is guaranteed precisely due to the definition of N-shattering.

As preliminaries for the proof, we first introduce the quantum \texttt{X}, \texttt{CNOT}, \texttt{TOFFOLI} gates, and quantum oracles for computing classical functions. The notation $\oplus$ refers to the classical \texttt{XOR} operation (i.e., addition modulo 2).

\begin{definition}[\texttt{X} gate]
    \texttt{X} (or the Pauli-\texttt{X}) gate is the quantum equivalent of the classical \texttt{NOT} gate. It operates on one qubit, mapping $\ket{0} \to \ket{1}$ and $\ket{1} \to \ket{0}$ (i.e. it ``flips'' the qubit).
\end{definition}

\begin{definition}[\texttt{CNOT} gate]
    \texttt{CNOT} is a quantum gate that operates on two qubits, one control and one target. If the control qubit is in the state $\ket{1}$, it flips (i.e., applies an \texttt{X} gate to) the target qubit.
\end{definition}

\begin{definition}[\texttt{TOFFOLI} gate]
    \texttt{TOFFOLI} is a quantum gate that operates on three qubits, two control and one target. If the control qubits are both in the state $\ket{1}$, it flips (i.e., applies an \texttt{X} gate to) the target qubit.
\end{definition}

\begin{definition}[Quantum oracle $U_f$]
    For classical functions $f: \{0,1\}^m \to \{0,1\}^n$, there exists\footnote{In fact, the quantum oracle $U_f$ can be implemented in a rather straightforward way, by using the truth table of $f$ and generalizations of the \texttt{CNOT} gate that use several qubits as controls.} a quantum oracle $U_f$ that performs the unitary evolution
    \[
    U_f \ket{x,y} = \ket{x,y \oplus f(x)},
    \]
    for $x \in \{0,1\}^m$ and $y \in \{0,1\}^n$.
\end{definition}

\begin{proof}{(of Theorem \ref{thm:quantum-PAC-k})}
Let $\cH \subseteq \cY^\cX$ be a hypothesis class of Natarajan dimension $d$ and let $\cH_d = \{0,1\}^{[d]}$. Let $\cA$ be a \textit{quantum} PAC (corresp. \textit{quantum} agnostic) learning algorithm for $\cH$. We proceed to show that it is possible to construct a quantum PAC (corresp. quantum agnostic) learning algorithm, $\bar{\cA}$, for $\cH_d$. Therefore, by reduction, we would obtain $m_{\bar{\cA},\cH_d} \leq m_{\cA, \cH}$, and thus $m^{\text{PAC}}_{\cH_d} \leq m^{\text{PAC}}_{\cH}$ (corresp. $m^{\text{agnostic}}_{\cH_d} \leq m^{\text{agnostic}}_{\cH}$). Since, by construction, $\vcDim(\cH_d) = d = \nDim(\cH)$, the reduction allows us to obtain the sample complexity lower bounds being proven here (for the multiclass case), from the corresponding lower bounds for quantum batch binary classification (Theorem \ref{thm:quantum-PAC-2}). Now, for the key step of the proof, given a \textit{quantum} learner $\cA$ for $\cH$, it is possible to construct a \textit{quantum} learner $\bar{\cA}$, for $\cH_d$, as follows. We show this for the \textit{quantum PAC} case, and comment here that this reduction in the \textit{quantum agnostic} case will proceed identically.

The learner $\bar{\cA}$ receives $m$-copies of the quantum example $\sum_{x \in [d]} \sqrt{D(x)} \ket{x,y}$, where $(x,y) \in [d] \times \{0,1\}$ and $D : [d] \to [0,1]$ is an \textit{arbitrary} distribution on $[d]$. Now, let $S = \{s_1,\ldots,s_d\} \subseteq \cX$ be a set and $f_0, f_1$ be the functions that witness the N-shattering of $S$ by $\cH$. The learner $\bar{\cA}$ will now attempt to convert\footnote{The learner $\bar{\cA}$ will then present these transformed examples to $\cA$, the quantum PAC learner for $\cH$.} each of its $m$-copies of $\sum_{x \in [d]} \sqrt{D(x)} \ket{x,y}$ to $\sum_{x \in [d]} \sqrt{D(x)} \ket{s_x,f_y(s_x)}$. However, as $s_x$ is simply an indexing of the elements of the set $S$, without loss of generality, we let $\bar{\cA}$ convert each of its $m$-copies of $\sum_{x \in [d]} \sqrt{D(x)} \ket{x,y}$ to $\sum_{x \in [d]} \sqrt{D(x)} \ket{x,f_y(x)}$ instead. We claim that the transformation
\begin{equation}\label{eq:transformation-reduction-proof}
\ket{\psi}_{\bar{\cA}} = \sum_{x \in [d]} \sqrt{D(x)} \ket{x,y} \mapsto \sum_{x \in [d]} \sqrt{D(x)} \ket{x,f_y(x)} = \ket{\psi}_{\cA},
\end{equation}
is attainable. Indeed, the quantum circuit shown in Figure \ref{fig:qcircuit-mc-lb} (and described subsequently) performs the following augmented transformation,
\begin{equation}\label{eq:transformation-reduction-proof-augmented}
\overline{\ket{\psi}_{\bar{\cA}}} = \sum_{x \in [d]} \sqrt{D(x)} \ket{x,y,0^{3\lceil \log_2 k \rceil}} \mapsto \sum_{x \in [d]} \sqrt{D(x)} \ket{x, 0, 0^{2\lceil \log_2 k \rceil}, f_y(x)} = \overline{\ket{\psi}_{\cA}}.
\end{equation}

\begin{figure}[ht]
    \centering
    \includegraphics[width=\linewidth]{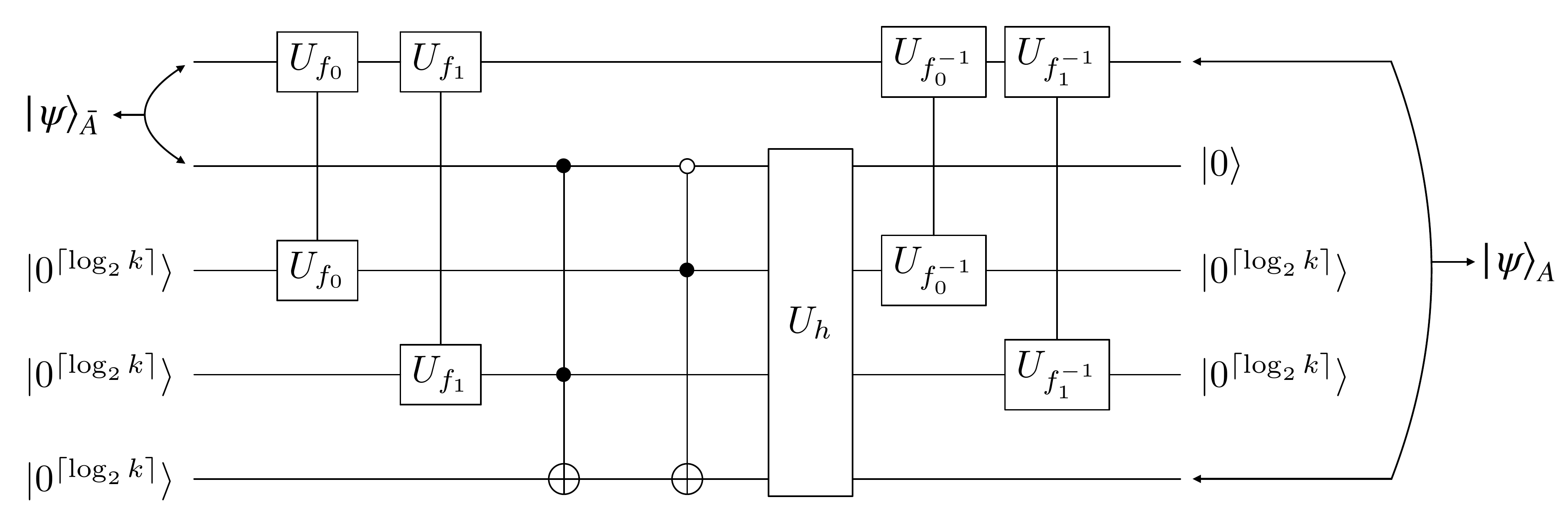}
    \caption{A quantum circuit that performs the transformation in (\ref{eq:transformation-reduction-proof-augmented}). The first row denotes the $\lceil \log_2 d \rceil$ qubits involved in encoding $x$. The second row denotes the single qubit involved in encoding $y$. The third and fourth row each denote the $\lceil \log_2 k \rceil$ ancillary qubits designed to hold the intermediate computation of $f_0(x)$ and $f_1(x)$ respectively. The fifth row denotes the $\lceil \log_2 k \rceil$ qubits designed to hold the output $f_y(x)$.}
    \label{fig:qcircuit-mc-lb}
\end{figure}

We form the augmented state $\overline{\ket{\psi}_{\bar{\cA}}}$ by appending $3 \lceil \log_2 k \rceil$ qubits in the state $\ket{0}$ to the input $\ket{\psi}_{\bar{\cA}}$. We intend to use one set of $\lceil \log_2 k \rceil$ qubits to encode each of $f_0(x)$, $f_1(x)$ (both ancillary) and $f_y(x)$ (solution). First, we pass the qubits encoding $x$ and one set of $\lceil \log_2 k \rceil$ qubits in the state $\ket{0}$ to the quantum oracle $U_{f_0}$. From this, we transform
\[
\overline{\ket{\psi}_{\bar{\cA}}} = \sum_{x \in [d]} \sqrt{D(x)} \ket{x,y,0^{3\lceil \log_2 k \rceil}} \mapsto \sum_{x \in [d]} \sqrt{D(x)} \ket{x, y, f_0(x), 0^{2\lceil \log_2 k \rceil}}.
\]
Next, we pass the qubits encoding $x$ and another set of $\lceil \log_2 k \rceil$ qubits in the state $\ket{0}$ to the quantum oracle $U_{f_1}$. From this, we transform
\[
\sum_{x \in [d]} \sqrt{D(x)} \ket{x, y, f_0(x), 0^{2\lceil \log_2 k \rceil}} \mapsto \sum_{x \in [d]} \sqrt{D(x)} \ket{x, y, f_0(x), f_1(x), 0^{\lceil \log_2 k \rceil}}.
\]
Now, we apply $\lceil \log_2 k \rceil$ \texttt{TOFFOLI} gates to each set of the qubit encoding $y$, a qubit involved in encoding $f_1(x)$ and a qubit $\ket{0}$ in the set of $\lceil \log_2 k \rceil$ remaining qubits that have not yet been operated on (that are designed hold the final result $f_y(x)$). From this, we transform
\[
\sum_{x \in [d]} \sqrt{D(x)} \ket{x, y, f_0(x), f_1(x), 0^{\lceil \log_2 k \rceil}} \mapsto \sum_{x \in [d]} \sqrt{D(x)} \ket{x, y, f_0(x), f_1(x), y.f_1(x)}.
\]
Next, we apply the \texttt{X} gate\footnote{The concise description for controlling on $\ket{y} = \ket{0}$ (instead of explicitly on $\ket{1-y} = \ket{1}$) is depicted in Figure \ref{fig:qcircuit-mc-lb}.} to the qubit encoding $y$, and then apply $\lceil \log_2 k \rceil$ \texttt{TOFFOLI} gates to each set of the qubit (now encoding) $1-y$, a qubit involved in encoding $f_0(x)$ and a qubit that is holding the final result. Finally, we apply the \texttt{X} gate again to the qubit encoding $1-y$ to revert the effect of the first \texttt{X} gate. All together, we have transformed
\begin{align*}
\sum_{x \in [d]} \sqrt{D(x)} \ket{x, y, f_0(x), f_1(x), y.f_1(x)} & \mapsto \sum_{x \in [d]} \sqrt{D(x)} \ket{x, 1-y, f_0(x), f_1(x), y.f_1(x)}\\
& \mapsto \sum_{x \in [d]} \sqrt{D(x)} \ket{x, 1-y, f_0(x), f_1(x), y.f_1(x) \oplus (1-y).f_0(x)}\\
& \mapsto \sum_{x \in [d]} \sqrt{D(x)} \ket{x, y, f_0(x), f_1(x), y.f_1(x) \oplus (1-y).f_0(x)}\\
& = \sum_{x \in [d]} \sqrt{D(x)} \ket{x, y, f_0(x), f_1(x), f_y(x)},
\end{align*}
where, in the last line, we recognize that $y.f_1(x) \oplus (1-y).f_0(x) = f_y(x)$.

Now that we have computed our solution, $f_y(x)$, to ensure we are not introducing extraneous outputs to satisfy reversibility of unitary computation, we must \textit{uncompute} all ancillary qubits to their original form. Furthermore, we also do not want the input $\ket{y}$ in our output\footnote{We do not want to allow $\cA$ to be able to cheat by providing it with this additional knowledge.}, and would like to ``remove'' it, by transforming it to the $\ket{0}$ state. We proceed with the \textit{uncomputation} as follows. First, we claim, with the following subproof, that there exists a (genuine) boolean \textit{function} $h$ that takes as input $f_0(x), f_1(x)$, and $f_y(x)$ and outputs $y$.
\begin{quote}
For a short (sub)proof-by-contradiction, consider a particular input $f_0(x), f_1(x)$, and $f_y(x) = y.f_1(x) \oplus (1-y).f_0(x)$ (for some $x$) that maps to both $y=0$ and $y=1$. This would imply that $f_0(x) = f_1(x)$ for that particular $x$. However, as $f_0$ and $f_1$ are witnesses of the N-shattering of $\cH$, they must disagree on all inputs $x$, providing us with the contradiction.
\end{quote}
As $h$ is a genuine (boolean) function, we can apply the oracle $U_h$ to the qubits encoding $f_0(x), f_1(x)$, and $f_y(x)$ and the qubit encoding $y$. This gives us the transformation\footnote{In other words, we have been able to ``remove'' $y$. It is important to note that this was possible precisely because $y$ was recoverable from $f_y$ (rendering $y \leftrightarrow f_y$ reversible), ensuring its ability to be encoded in a unitary operation.}
\begin{align*}
\sum_{x \in [d]} \sqrt{D(x)} \ket{x, y, f_0(x), f_1(x), f_y(x)} & \mapsto \sum_{x \in [d]} \sqrt{D(x)} \ket{x, y \oplus y, f_0(x), f_1(x), f_y(x)}\\
& = \sum_{x \in [d]} \sqrt{D(x)} \ket{x, 0, f_0(x), f_1(x), f_y(x)}.
\end{align*}

Lastly, we apply $U_{f_0^{-1}}$ to the qubits encoding $x$ and the ancillary qubits encoding $f_0(x)$. And then, we apply $U_{f_1^{-1}}$ to the qubits encoding $x$ and the ancillary qubits encoding $f_1(x)$. All together, we have transformed,
\begin{align*}
\sum_{x \in [d]} \sqrt{D(x)} \ket{x, 0, f_0(x), f_1(x), f_y(x)} & \mapsto \sum_{x \in [d]} \sqrt{D(x)} \ket{x, 0, 0^{\lceil \log_2 k \rceil}, f_1(x), f_y(x)}\\
& \mapsto \sum_{x \in [d]} \sqrt{D(x)} \ket{x, 0, 0^{2 \lceil \log_2 k \rceil}, f_y(x)} = \overline{\ket{\psi}_{\cA}}.
\end{align*}

Now that we have performed the transformation in (\ref{eq:transformation-reduction-proof-augmented}) quantumly, ignoring\footnote{They are each deterministically $\ket{0}$.} ancillary (and the ``removed'' $y$) qubits in the output, we note that the output of the transformation in (\ref{eq:transformation-reduction-proof}) is attained.

The learner $\bar{\cA}$ now feeds the $m$-copies of $\sum_{x \in [d]} \sqrt{D(x)} \ket{x,f_y(x)}$ as input to the \textit{quantum} learner $\cA$ and obtains (from $\cA$) a \textit{classical} (black box) function\footnote{In fact, our identification $s_x \leftrightarrow x$ earlier means that $g:[d] \to \cY$ instead, and $f: [d] \to \{0,1\}$ would then be given by $f(i) = 1$ if and only if $g(i) = f_1(i)$.} $g: S \to \cY$ (recall $S = \{s_1, \ldots, s_d\} \subseteq \cX$ is the subset that is N-shattered by $\cH$). Finally, $\bar{\cA}$ outputs the hypothesis (as a black box function) $f: [d] \to \{0,1\}$, given by $f(i) = 1$ if and only if $g(s_i) = f_1(s_i)$ (which by construction learns $\cH_d$).

\end{proof}

\subsubsection{Upper Bounds}\label{sec:classical-quantum-multiclass-PAC-ub-proof}

In general, classical sample complexity upper bounds trivially translate to the corresponding quantum ones, as the quantum learner always has the option of simply performing a measurement on each quantum example, and perform the classical learning algorithm on the resulting $m$ classical examples. We include the theorem statement (Theorem \ref{thm:quantum-PAC-k-ub}) and proof below for completeness.

\begin{theorem}[Sample complexity upper bounds for quantum batch multiclass classification]\label{thm:quantum-PAC-k-ub}
Let $\cH \subseteq \cY^\cX$, with $|\cY| = k > 2$. The sample complexity of an $(\epsilon,\delta)$-quantum PAC learner (and, respectively, an $(\epsilon, \delta)$-quantum agnostic learner) for the hypothesis class $\cH$ is bounded above as follows:
\[
m^{\text{PAC}} = \bigO\Bigg( \frac{\nDim(\cH)\log(k)\log(\frac{1}{\epsilon}) + \log(\frac{1}{\delta})}{\epsilon} \Bigg), \;\; \text{and} \;\;\; m^{\text{agnostic}} = \bigO\Bigg( \frac{\nDim(\cH)\log(k) + \log(\frac{1}{\delta})}{\epsilon^2} \Bigg).
\]
\end{theorem}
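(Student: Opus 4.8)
The plan is to use the observation, already flagged in Section~\ref{sec:quantum-batch}, that a quantum learner can always simulate a classical one by first measuring its examples. Concretely, given $m$ copies of the realizable quantum example $\sum_{x\in\{0,1\}^n}\sqrt{D(x)}\ket{x,h^\star(x)}$, the learner measures each copy in the computational basis (Definition~\ref{def:measurement}); by the Born rule this returns a classical labeled pair $(x,h^\star(x))$ with probability $|\sqrt{D(x)}|^2 = D(x)$, and since the $m$ copies are unentangled product states the outcomes are i.i.d.\ draws from $D$ together with their $h^\star$-labels --- exactly the input a classical multiclass PAC learner expects. The agnostic case is identical: measuring $\sum_{(x,y)}\sqrt{D(x,y)}\ket{x,y}$ yields $(x,y)\sim D$. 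Composing this measurement front-end with any classical $(\epsilon,\delta)$-PAC (resp.\ agnostic) multiclass learner therefore produces an $(\epsilon,\delta)$-quantum PAC (resp.\ quantum agnostic) learner for $\cH$ consuming the same number of examples, so it suffices to quote the best known classical multiclass sample-complexity upper bounds.

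For the realizable bound I would invoke \cite{daniely2015multiclass}, who show that $\bigO\big((\nDim(\cH)\log k\,\log(1/\epsilon)+\log(1/\delta))/\epsilon\big)$ classical examples suffice (via a recursive sample-compression / shifting argument built on the one-inclusion graph of $\cH$); for the agnostic bound I would invoke \cite{bendavid1995characterizations}, who obtain $\bigO\big((\nDim(\cH)\log k+\log(1/\delta))/\epsilon^2\big)$ via uniform convergence, using that the Natarajan dimension controls the relevant growth function up to a $\log k$ factor. Substituting these into the reduction above yields precisely the two bounds in the statement. (As noted in the footnote near Table~\ref{table:results-in-batch-LT}, the alternative realizable bound with a better $\epsilon$-dependence but an extra $\nDim(\cH)$ factor --- essentially Natarajan's original analysis --- transfers in exactly the same way, and one could record it alongside.)

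The only point that needs a moment of care is checking that the measurement step genuinely decouples into $m$ independent classical examples with the correct marginal and leaks nothing a classical learner would be deprived of; this is immediate because each quantum example is a separate, unentangled copy and a standard-basis measurement of $\sum_x\sqrt{D(x)}\ket{x,h^\star(x)}$ collapses it to $\ket{x,h^\star(x)}$ with probability $D(x)$. I do not expect a real obstacle here: unlike the matching lower bounds (Theorem~\ref{thm:quantum-PAC-k}), which must rule out cleverer genuinely-quantum strategies and hence require the explicit circuit of Appendix~\ref{sec:quantum-multiclass-lb-proof}, an upper bound only requires exhibiting one good learner, and the measure-then-learn-classically learner already is one.
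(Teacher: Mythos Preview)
Your proposal is correct and matches the paper's own proof essentially line for line: measure each quantum example in the computational basis to obtain i.i.d.\ classical examples, then invoke the classical multiclass upper bounds of \cite{daniely2015multiclass} (realizable) and \cite{bendavid1995characterizations} (agnostic). The paper likewise records the alternative realizable bound with tighter $\epsilon$-dependence as a remark, just as you suggest.
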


\begin{proof}
    The quantum PAC (resp. quantum agnostic) learner performs a measurement on each of the $m$ examples $\sum_x \sqrt{D(x)} \ket{x,y}$ (corresp. $\sum_{x,y} \sqrt{D(x,y)} \ket{x,y}$) in the standard computational basis. This provides $m$ classical examples, i.e. gives us the training set $\{(x_i,y_i)\}_{i=1}^m$, where a given $(x_i,y_i)$ appears with probability $D(x_i)$ (corresp. $D(x_i,y_i)$). Now, the quantum PAC (resp. quantum agnostic) learners calls upon an $(\epsilon, \delta)$-PAC (resp. agnostic) \textit{classical} learner to learn on the training set $\{(x_i,y_i)\}_{i=1}^m$, and outputs the resulting \textit{classically learned} hypothesis. Thus, the classical sample complexity sufficiency requirements \cite{daniely2015multiclass, bendavid1995characterizations} continue to hold, and our proof is complete.
\end{proof}

In \cite{daniely2015multiclass}, the classical upper bound
\[
m^{\text{PAC}} = \bigO\Bigg( \frac{\nDim(\cH)(\log(k) + \log(\frac{1}{\epsilon}) + \log(\nDim(\cH))) + \log(\frac{1}{\delta})}{\epsilon} \Bigg),
\]
was shown to hold which has a tighter dependence on $\epsilon$, but a looser dependence on $\nDim(\cH)$. The proof above naturally extends this bound too to the quantum case.

While this paper does not focus on the DS dimension, which was recently shown to characterize multiclass learnability, including the $k \to \infty$ case \cite{brukhim2022characterization}, we note that the upper bound involving the DS dimension can be similarly extended to the quantum setting. The challenges in adapting the proof of the lower bound involving the DS dimension \cite{daniely2014optimal} are discussed in the conclusion (Section \ref{sec:conclusion}).

\section{Classical Online Learning}\label{sec:classical-online-learning}

So far, we have been working with learning in the batch setting, where we are provided with all the examples at once\footnote{This is typical for most settings where we are trying to learn a hypothesis via inductive reasoning (e.g. learning a function to fit data, etc.)}. For several practical applications, it is either impossible to obtain all the examples at once (e.g., recommendation systems), or we simply wish to evolve our learning over time. In these cases, \textit{online learning} \cite{littlestone1988learning} -- where we iteratively improve our hypothesis using examples we receive over time, and using our current hypothesis to predict for the upcoming example -- is the appropriate framework to be placing ourselves in. First, we will introduce known models and results in classical online learning, and a \textit{classical} generalization in Section \ref{model:classical-adversary-provides-a-distribution} that, in turn, provides us with a \textit{quantum} online learning model (Section \ref{sec:quantum_online}) as a natural generalization. For ease of exposition, we begin with a treatment of boolean function classes in the realizable setting.

\subsection{Adversary provides an input} \label{model:classical-adversary-provides-an-input}

Let $\cC := \{c : \cX \to \{0,1\} \}$, and $\cH \subseteq \cC$ (i.e., $\cH \subseteq \{0,1\}^\cX$). A protocol for online learning is a $T$-round procedure described as follows: at the $t$-th round,

\begin{enumerate}[noitemsep]
    \item Adversary provides input point in the domain: $x_t \in \cX$.
    \item Learner uses a hypothesis\footnote{Note that the learner may choose a hypothesis $h_t \in \cC \setminus \cH$, i.e., we do not require the learner to be \textit{proper}.} $h_t \in \cC$, and makes the prediction $\hat{y}_t = h_t(x_t) \in \{0,1\}$.
    \item Adversary provides the input point's label, $y_t = h^\star(x_t)$, where $h^\star \in \cH$.
    \item Learner suffers a loss of 1 (a `mistake'), if $\hat{y}_t \neq y_t$, i.e. $\lossf_\mathbb{I}(h_t,x_t,h^\star) = \indf[h_t(x_t) \neq h^\star(x_t)]$.
\end{enumerate}

Therefore, the learner's total loss is given by,
\begin{equation}\label{eq:indicator_loss}
\lossf_\mathbb{I}(\mathbf{h},\mathbf{x},h^\star) = \sum_{t=1}^T  \lossf_\mathbb{I}(h_t,x_t,h^\star) = \sum_{t=1}^T \indf [h_t(x_t) \neq h^\star(x_t)],
\end{equation}
where we use $\vech := (h_1,\ldots,h_T)$ to denote the sequence of hypotheses that the learner uses, and $\vecx := (x_1,\ldots,x_T)$ to denote the sequence of instances that the adversary provides. The subscript $\mathbb{I}$ (in $\lossf_\mathbb{I}$) indicates that this is the input-based indicator (0-1) loss function\footnote{This is distinct from the probabilistic loss function $\lossf_P$ that we will encounter later in Section~\ref{model:classical-adversary-provides-a-distribution}.}.

The learner chooses an algorithm $\cA$ that will generate the sequence $\vech_\cA$ following the protocol above. The learner's goal is to minimize $\lossf_\mathbb{I}(\vech_\cA,\cH) = \sup_{\vecx, h^\star \in \cH} \Expect[\lossf_\mathbb{I}(\vech_\cA,\vecx,h^\star)]$, i.e., make as few mistakes, on average, as possible regardless of the adversary's (potentially worst-case) choices of sequence of instances $\vecx := (x_1,\ldots,x_T)$ and labeling function $h^\star \in \cH$. For the subsequent bound on $\lossf_\mathbb{I}(\vech_\cA,\cH)$, we first define the combinatorial parameter, Littlestone dimension, $\lDim(\cH)$.

\begin{definition}[Littlestone dimension]\label{def:Ldim}
Let $T$ be a rooted tree whose internal nodes are labeled by elements from $\cX$. Each internal node's left edge and right edge are labeled 0 and 1, respectively. The tree $T$ is $\text{L}$-shattered by $\cH$ if, for every path from root to leaf which traverses the nodes $x_1,\ldots,x_d$, there exists a hypothesis $h \in \cH$ such that, for all $i$, $h(x_i)$ is the label of the edge $(x_i, x_{i+1})$. We define the Littlestone dimension, $\lDim(\cH)$, to be the maximal depth of a complete binary tree that is $\text{L}$-shattered by $\cH$.
\end{definition}

The classical online learning model in this subsection (Section \ref{model:classical-adversary-provides-an-input}) has been thoroughly studied, and the following theorem characterizes it in terms of the Littlestone dimension.

\begin{theorem}[Bounds on $\lossf_\mathbb{I}(\vech_\cA,\cH)$ for the canonical classical online model; Corollary 21.8 in \cite{shalev2014understanding} and Theorem 24 in \cite{daniely2015multiclass}]\label{thm:mistake_bound_ldim}
Let $\cH \subseteq \{0,1\}^\cX$ be a hypothesis class. The Standard Optimal Algorithm (SOA)\footnote{At round $t$, given input $x_t$, the SOA predicts $\hat{y_t} \in \{0,1\}$ that maximizes the Littlestone dimension of the version space consistent with $\hat{y_t}$.} is a deterministic algorithm that achieves a worst-case total loss of $\lDim(\cH)$, i.e. $\lossf_\mathbb{I}(\vech_{SOA},\cH) = \lDim(\cH)$. Furthermore, for any algorithm $\cA$, the expected total loss on the worst-case sequence is at least\footnote{The adversary traverses the shattered tree and provides, at every round, the label that the (randomized) algorithm $\cA$ is less likely to predict.} $\frac{1}{2} \cdot \lDim(\cH)$, i.e. $\lossf_\mathbb{I}(\vech_\cA,\cH) \geq \frac{1}{2} \cdot \lDim(\cH)$.
\end{theorem}

\subsubsection{Can we obtain a quantum generalization of the classical online model?}

Given the popularity and widespread applications of the classical online model, we explore the feasibility of developing a quantum version of the above online model to ultimately inquire whether such a quantum adaptation would be any more powerful from the perspective of the learner and/or the adversary. In essence, as there exists a well-defined ``landscape'' for classical and quantum \textit{batch} learning, we seek to delineate the analogous landscape in the \textit{online} learning context.

To this, if one attempts to na\"ively generalize the above classical model to the quantum setting, an obvious issue arises: the quantum examples of the form (\ref{example:quantum-realizable}) do not split the input-label pair. In particular, an adversary cannot temporally separate its provision of the input point and its label. A first step towards a model that can be generalized to the quantum setting, then, is to reorder the steps at the $t$-th round to $2, 1 \, \& \, 3, 4$ (i.e., where the learner provides a prediction $\hat{y_t}$ after which the adversary presents \textit{both} the input and its label $(x_t,y_t)$). 

While this reordering gives an entirely equivalent model that is, once again, characterized by Littlestone dimension (Theorem \ref{thm:mistake_bound_ldim}), it is not sufficient for a natural quantum generalization. The issue now is that a \textit{classical} adversary only ever presents one (classical) example at each round. How do we go about generalizing a single classical example to a \textit{quantum} adversary's (quantum) example that, in general, sits in superposition? It appears futile to attempt to do so. The missing piece, evidently, is the lack of a notion of a distribution over examples in the classical online model(s) examined so far.

\subsection{Adversary provides a distribution} \label{model:classical-adversary-provides-a-distribution}

Now that we have identified the unfilled gap to transition to the quantum setting, we first state the appropriate \textit{classical} generalization of the canonical classical model (in Section \ref{model:classical-adversary-provides-an-input}) by asking the adversary to, at each $t$, choose a distribution over a set of input-label pairs, from which an explicit input-label pair is then drawn. The protocol for the $T$-round procedure will be as follows: at the $t$-th round,
\begin{enumerate}[noitemsep]
    \item Learner provides a hypothesis $h_t \in \cC$.
    \item Adversary chooses a distribution $D_t: \cX \to [0, 1]$ on the instance space, draws $x \sim D_t$, and reveals $(x,h^\star(x))$ to the learner, where $h^\star \in \cH$.
    \item Learner suffers, but does not ``see'', a loss of $\lossf_P(h_t,D_t,h^\star) := \Prob_{x \sim D_t} (h_t(x) \neq h^\star(x))$.
\end{enumerate}
Here, the learner's total loss is given by,
\begin{equation}\label{eq:prob_loss}
\lossf_P(\mathbf{h},\mathbf{D}, h^\star) = \sum_{t=1}^T  \lossf_P(h_t,D_t,h^\star) = \sum_{t=1}^T \Prob_{x \sim D_t} (h_t(x) \neq h^\star(x)),
\end{equation}
where we additionally use $\mathbf{D} := (D_1,\ldots,D_T)$ to denote the sequence of distributions that the adversary chooses. Analogously, the learner's objective is to choose $\vech$ to minimize $\lossf_P(\mathbf{h},\cH) = \sup_{\vecD, \, h^\star \in \cH} \Expect[\lossf_P(\mathbf{h},\mathbf{D}, h^\star)]$.

We identify this model as the adaptive adversary variant of the online learning model recently considered in Dawid and Tewari \cite{dawid2022learnability}. Note that, if we restrict the adversary, allowing it to choose only point masses, we recover the \textit{reordered} model in Section \ref{model:classical-adversary-provides-an-input}.

From a learning standpoint, the adversary-provides-a-distribution model differs fundamentally from the canonical model in that the learner, here, does not have full information about its own loss at any given round. Since the learner does not know $D_t$, it cannot compute $\lossf_P(h_t,D_t,h^\star)$ for any $t$. In other words, the learner seeks to minimize a quantity that it cannot even compute. This \textit{partial information} setting here, at least at first, appears to be more challenging for the learner as it not only grapples with the inability to compute its loss but also contends with the larger space available to the adversary for its choices ($D_t \in [0,1]^\cX$ vs. $x_t \in \{0,1\}^\cX$).

However, as we will soon illustrate, this perceived challenge proves not to be the case. The key factor influencing this distinction lies in the learner's ability to calculate $\lossf_\mathbb{I}(\vech,\vecx,h^\star)$ for the observed sequence of examples $\vecx$, providing an unbiased estimator for its total loss $\lossf_P(\mathbf{h},\mathbf{D}, h^\star)$. We demonstrate that it is indeed (necessary and) sufficient for a learner in the adversary-provides-a-distribution model to execute SOA on the observed sequence of examples $\vecx$ to achieve a bound analogous to that in the canonical model (cf. Theorem \ref{thm:mistake_bound_ldim}). Before delving into the results, we formally define what a learner, an adversary, and learnability entails for the adversary-provides-a-distribution model we have just discussed.

\begin{definition}[Classical online learner]\label{def:classical-online-learner-adversary-dist} An algorithm $\cA$ is a classical online learner for a hypothesis class $\cH \subseteq \cC$ if having received a sequence of examples over the first $t$ rounds,\\
$(x_1,h^\star(x_1)),\ldots,(x_t,h^\star(x_t))$ where $x_i \sim D_i$ with $D_i$ arbitrary (unknown), $\cA$ outputs a hypothesis $h_{t+1} \in \cC$ at round\footnote{Prior to receiving any examples, $\cA$ outputs some arbitrary hypothesis $h_1 \in \cC$ at round 1.} $t+1$.
\end{definition}

\begin{definition}[Classical adversary]\label{def:classical-online-adversary-adversary-dist} Having received a sequence of hypothesis $\vech|_t = (h_1,\ldots,h_t)$ from the learner, and a sequence of examples $\vecx|_t = (x_t,\ldots,x_t)$ drawn previously from its own prior choices of distributions $\vecD|_t = (D_1,\ldots,D_t)$ over the first $t$ rounds, at round $t+1$, a classical (online) adversary chooses a distribution $D_{t+1}: \cX \to [0,1]$ on the instance space, draws $x_{t+1} \in D_{t+1}$ and reveals $(x_{t+1},h^\star(x_{t+1}))$ to the learner, where $h^\star \in \cH$ is consistent with all preceding labeled examples.
\end{definition}

\begin{definition}[Classical online learnability]\label{def:classical-online-learnability-adversary-dist}
A hypothesis class $\cH$ is classical online learnable if there exists a classical online learning algorithm $\cA$ such that:\newline $\lossf_P(\vech_\cA,\cH) = \sup_{\vecD, \, h^\star \in \cH} \Expect[\lossf_P(\vech_\cA,\mathbf{D}, h^\star)] = o(T)$.
\end{definition}

With these definitions in place, our next objective is to characterize learnability in the adversary-provides-a-distribution framework. When we later introduce the quantum online learning model (Section \ref{sec:quantum_online}), these classical insights will serve as a foundation, enabling us to draw direct comparisons and understand the strong links connecting the \textit{classical} adversary-provides-a-distribution model to the \textit{quantum} online learning setup.

\begin{theorem}[Upper bound on the expected loss for the classical adversary-provides-a-distribution model] \label{thm:loss-upper-bdd-dist}
Let $\cH \subseteq \{0,1\}^\cX$ be a hypothesis class, and $h^\star \in \cH$. For every adversary, there exists a classical online learner for $\cH$ that satisfies
\[
\Expect[\lossf_P(\mathbf{h}, \vecD, h^\star)] = \bigO(\lDim(\cH)).
\]
\end{theorem}

\begin{proof}
Let $\mathbf{D}$ and $h^\star \in \cH$ be arbitrarily chosen. We proceed by first obtaining a high-probability bound for $\lossf_P(\mathbf{h},\mathbf{D},h^\star)$, and then converting it to an in-expectation one. To obtain the high-probability bound, we begin by establishing that the difference between $\lossf_P(\mathbf{h},\mathbf{D},h^\star)$ and $\lossf_\mathbb{I}(\mathbf{h},\vecx, h^\star)$ (see Section \ref{model:classical-adversary-provides-an-input}, (\ref{eq:indicator_loss})) on the revealed stream of examples $\vecx = (x_t)_{t=1}^T$ (with each $x_t \sim D_t$) is the sum of a martingale difference sequence.

Let $M_t := \underbrace{\lossf_P(h_t,D_t,h^\star)}_{P_t} - \underbrace{\lossf_\mathbb{I}(h_t,x_t,h^\star)}_{I_t}$, where $x_t \sim D_t$. With the filtration $\mathbb{F} := (\cF_t)_{t=1}^T$, where $\cF_t$ corresponds to the information revealed\footnote{Note that this is \underline{not} alluding to the information revealed to the learner. Instead, we can think of this information as having been revealed to an arbiter until the end of round $t$, where during each round the arbiter performs the draw $x_t \sim D_t$ on the adversary's communicated choice of $D_t$ and provides $(x_t,h^\star(x_t))$ to the learner.} up to (and, including) round $t$, namely\footnote{Recall, $\mathbf{v}|_t = (v_1,\ldots,v_t)$, i.e. $\mathbf{v}$ restricted to the first $t$ rounds.} $\vech|_t$, $\vecD|_t$, $\vecx|_t$ and $\mathbf{y}|_t := (h^\star(x_i))_{i=1}^t$, we note that $\mathbf{M} := (M_t)_{t=1}^T$ is adapted to $\mathbb{F}$ and $\forall t$,
\begin{align*}
    \Expect[M_t] &= \Expect[P_t] - \Expect[I_t] < \infty \\
    \Expect[M_t|\cF_{t-1}] &= \Expect[P_t|\cF_{t-1}] - \Expect[I_t|\cF_{t-1}] = P_t - P_t = 0,
\end{align*}
where the first line is due to the boundedness ($0 \leq P_t, I_t \leq 1, \; \forall t$) of $P_t$ and $I_t$. And, the second line is due to $\Expect[I_t|\cF_{t-1}] = 1 \cdot \Prob_{x_t \sim D_t} (h_t(x_t) \neq h^\star(x_t)) + 0 \cdot \Prob_{x_t \sim D_t} (h_t(x_t) = h^\star(x_t)) = P_t$ and $\Expect[P_t|\cF_{t-1}] = P_t$ (see Remark \ref{rem:remark-on-randomness}). Therefore, $\mathbf{M}$ is a martingale difference sequence.

We first bound the \textit{predictable quadratic variation} $\langle M_T \rangle$ of $\mathbf{M}$, as follows:
\begin{align} \label{eq:bound-on-wt}
\langle M_T \rangle &:= \sum_{t=1}^T \Expect[M_t^2|\cF_{t-1}] = \sum_{t=1}^T \Expect[(P_t-I_t)^2|\cF_{t-1}] \nonumber\\
& = \sum_{t=1}^T \Big(\Expect[P_t^2|\cF_{t-1}] -2 \Expect[P_tI_t|\cF_{t-1}] + \Expect[I_t^2|\cF_{t-1}]\Big) \nonumber\\
& = \sum_{t=1}^T (P_t^2 - 2P_t^2 + P_t) = \sum_{t=1}^T (P_t - P_t^2) \nonumber\\
& \leq \sum_{t=1}^T P_t,
\end{align}
where the second line is due to linearity of expectation, the third line uses $\Expect[P_tI_t|\cF_{t-1}] = P_t\Expect[I_t|\cF_{t-1}] = P_t \cdot P_t = P_t^2$, and the fourth line is due to $P_t^2 \geq 0, \; \forall t$.
Now, by Theorem 1 of \cite{beygelzimer2011contextual}, with probability $1-\delta$ (for any $\delta > 0$), we have
\begin{align*}
\sum_{t=1}^T M_t &\leq \log \Big( \frac{1}{\delta} \Big) + (e-2)\langle M_T \rangle \iff
\sum_{t=1}^T P_t \leq \sum_{t=1}^T I_t + \log \Big( \frac{1}{\delta} \Big) + (e-2)\langle M_T \rangle.
\end{align*}

Note here that $\sum_{t=1}^T P_t = \lossf_P(\mathbf{h},\mathbf{D}, h^\star)$. Therefore, appealing to Theorem \ref{thm:mistake_bound_ldim} and the inequality on $\langle M_T \rangle$ in (\ref{eq:bound-on-wt}), we have that (with probability $1-\delta$),
\begin{align*}
\lossf_P(\mathbf{h},\mathbf{D}, h^\star) & \leq \lDim(\cH) + \log \Big( \frac{1}{\delta} \Big) + (e-2)\lossf_P(\mathbf{h},\mathbf{D}, h^\star)\\
\implies \lossf_P(\mathbf{h},\mathbf{D}, h^\star) & \leq \frac{1}{1-(e-2)} \lDimH + \frac{1}{1-(e-2)} \log \Big( \frac{1}{\delta} \Big),
\end{align*}
or equivalently, for any $\delta > 0$,
\begin{equation}\label{eq:high-probability-bound-freedman}
\Prob\Big[\lossf_P(\mathbf{h},\mathbf{D}, h^\star) > \frac{1}{1-(e-2)} \cdot \lDimH + \frac{\delta}{1-(e-2)}\Big] \leq e^{-\delta}.
\end{equation}
Now, we compute the in-expectation bound (i.e. a bound on $\Expect[\lossf_P(\mathbf{h},\mathbf{D}, h^\star)]$) guaranteed by the above tail bound (\ref{eq:high-probability-bound-freedman}). Let $\displaystyle c := \frac{1}{1-(e-2)}$.
\begin{align*}
    \Expect[\lossf_P(\mathbf{h},\mathbf{D}, h^\star)] & = \int_0^\infty \Prob[\lossf_P(\mathbf{h},\mathbf{D}, h^\star) > \ell] \, d \ell \\
    & = \int_0^{c \cdot \lDimH} \Prob[\lossf_P(\mathbf{h},\mathbf{D}, h^\star) > \ell] \, d \ell + \int_{c \cdot \lDimH}^\infty \Prob[\lossf_P(\mathbf{h},\mathbf{D}, h^\star) > \ell] \, d \ell \\
   & = \int_0^{c \cdot \lDimH} \Prob[\lossf_P(\mathbf{h},\mathbf{D}, h^\star) > \ell] \, d \ell + c \int_0^\infty \Prob\Big[\lossf_P(\mathbf{h},\mathbf{D}, h^\star) > c \cdot \lDimH + c \delta \Big] \, d \delta \\
   & \leq \int_0^{c \cdot \lDimH} 1 \, d \ell + c \int_0^\infty e^{-\delta} \, d \delta \\
    & = c \cdot \lDimH + c\\
    & = \bigO(\lDim(\cH)),
\end{align*}
where the first line holds as $\lossf_P(\mathbf{h},\mathbf{D}, h^\star) \geq 0$, the third line uses the change of variable $\ell = c \cdot \lDimH + c \delta$, the fourth line uses a na\"ive bound of $\Prob[\lossf_P(\mathbf{h},\mathbf{D}, h^\star) > \ell] \leq 1$ (for $0 \leq \ell \leq c \cdot \lDimH$) on the first integral, and our high-probability bound in (\ref{eq:high-probability-bound-freedman}) on the second integral.
\end{proof}

\begin{remark}[Internal Randomness]\label{rem:remark-on-randomness}
In the proof of Theorem \ref{thm:loss-upper-bdd-dist} (and in subsequent proofs throughout the paper), the filtration $\mathbb{F}$ does not include any information on the internal randomness of the learner or the (adaptive) adversary at any round. In other words, the bound in Theorem \ref{thm:loss-upper-bdd-dist} applies to a $T$-round procedure that is behaving deterministically (in as much as it applies to the learner and the adversary). Crucially, however, it applies to \underline{\textit{all}} such procedures. Therefore, the bounds presented in Theorem \ref{thm:loss-upper-bdd-dist} (as well as others throughout the paper) imply identical bounds on a $T$-round procedure where internal randomness is allowed for both the learner and the adversary. 
\end{remark}

To summarize, Theorem $\ref{thm:loss-upper-bdd-dist}$ tells us that $\lDimH$ continues to be a sufficient condition for learnability under the new (adversary-provides-a-distribution) model in Section \ref{model:classical-adversary-provides-a-distribution}. In other words, a learner that performs SOA on the observed sequence of examples $\vecx$ only suffers a constant overhead under the adversary-provides-a-distribution model as compared to SOA under the canonical online model (Section \ref{model:classical-adversary-provides-an-input}). Next, we show (Theorem \ref{thm:loss-lower-bdd-dist}) that $\lDimH$ is also a necessary condition for learnability, and thus fully characterizes the learnability of the adversary-provides-a-distribution model.

\begin{theorem}[Lower bound on the expected loss for the classical adversary-provides-a-distribution model] \label{thm:loss-lower-bdd-dist}
Let $\cH \subseteq \{0,1\}^\cX$ be a hypothesis class, and $h^\star \in \cH$. For every classical online learner of $\cH$, there exists an adversary such that
\[
\Expect[\lossf_P(\mathbf{h}, \vecD, h^\star)] = \Omega(\lDim(\cH)).
\]
\end{theorem}
\begin{proof}
Consider an adversary which chooses each $D_t$ to be a point mass on the instance space, i.e. the adversary simply chooses an instance $x_t$ at each $t$. Since each $x_t \sim D_t$ is deterministic, we have
\[
\lossf_P(\mathbf{h}, \vecD, h^\star) = \lossf_\mathbb{I}(\mathbf{h}, \vecx, h^\star) = \Omega(\lDim(\cH)),
\]
where the second equality is due to (the lower bound part of) Theorem \ref{thm:mistake_bound_ldim}. By taking expectations, we conclude our proof.
\end{proof}

Now that we have characterized the learnability of the adversary-provides-a-distribution model in Section \ref{model:classical-adversary-provides-a-distribution} (via Theorems \ref{thm:loss-upper-bdd-dist} and \ref{thm:loss-lower-bdd-dist}), we proceed to introduce and present results for related classical online models which arise from successively relaxing, first, the \textit{realizability} assumption and then, the \textit{boolean function class} assumption. These serve to define our scope and lay the groundwork for a comprehensive understanding before introducing the anticipated quantum generalization.

\subsection{Adversary provides a distribution in the agnostic setting} \label{model:classical-adversary-provides-a-distribution-agnostic}
In the agnostic framework, we dispense with the realizability assumption that $h^\star \in \cH$ (i.e., the labels need not be consistent with any hypothesis in the hypothesis class). In fact, the labels need not arise from a labeling function at all, i.e., the examples may be inconsistent\footnote{That is, it is entirely possible to encounter both $(x,0)$ and $(x,1)$ in the sequence of examples.}. The agnostic generalization of the adversary-provides-a-distribution model in Section \ref{model:classical-adversary-provides-a-distribution} is given by the following protocol for the $T$-round procedure: at the $t$-th round,
\begin{enumerate}[noitemsep]
    \item Learner provides a hypothesis $h_t \in \cC$.
    \item Adversary chooses a distribution $D_t: \{0,1\}^{n+1} \to [0, 1]$ on the instance space, draws and reveals $z_t = (x_t,y_t) \sim D_t$ to the learner.
    \item Learner suffers, but does not ``see'', a loss of $\lossf_P(h_t,D_t) := \Prob_{(x,y) \sim D_t} (h_t(x) \neq y)$.
\end{enumerate}

As there may be no hypothesis that provides the true label on every instance over the $T$ rounds, we resort to comparing the learner to the best fixed hypothesis in $\cH$ in hindsight. In other words, the learner's total \textit{regret} is given by,
\begin{align}\label{eq:regret-loss}
\regretf_P(\mathbf{h},\vecD,\cH) = \sum_{t=1}^T \cL_P(h_t,D_t) - \inf_{h \in \cH} \sum_{t=1}^T \cL_P(h,D_t).
\end{align}

Definitions \ref{def:classical-online-learner-adversary-dist}, \ref{def:classical-online-adversary-adversary-dist}, and \ref{def:classical-online-learnability-adversary-dist} are adapted analogously to give us the notions of a classical online learner, adversary and learnability in the agnostic setting. Additionally, we introduce further notation and definitions to facilitate the subsequent theorem.

\vspace*{2ex}

An \textbf{$\cX$-valued tree $\underline{X}$ of depth $T$} is a rooted complete binary tree with nodes labeled by elements of $\cX$. We identify the tree $\underline{X}$ with the sequence $x^{(1)},\ldots,x^{(T)}$ of labeling functions $x^{(i)}: \{\pm 1\}^{i-1} \to \cX$ which provide the labels for each node. Here, $x^{(1)}$ labels the root of the tree, and $x^{(i)}$ for $i > 1$ labels the node obtained by following the path of length $i-1$ from the root, with $+1$ indicating `right' and $-1$ indicating `left'. A path of length $T$ is given by the sequence $\epsilon = (\epsilon_1,\ldots,\epsilon_T) \in \{\pm 1\}^T$. We denote the label at round $t$ along this path as $x^{(t)}(\epsilon)$, understanding that $x^{(t)}$ depends only on the prefix $(\epsilon_1,\ldots,\epsilon_{t-1})$ of $\epsilon$. With this notion of a tree, we define the \textit{sequential Rademacher complexity} of a hypothesis class, $\cH \subseteq \cY^\cX$ \cite{rakhlin2015sequential}.

\begin{definition}[Sequential Rademacher complexity of $\cH$]\label{def:seq-rad-H}
    The \textit{sequential Rademacher complexity} of a function class $\cH \subseteq \reals^\cX$ on an $\cX$-valued tree $\underline{X}$ is defined as
    \[
    \mathfrak{R}_T(\cH,\underline{X}) = \Expect\Big[\sup_{h \in \cH} \frac{1}{T} \sum_{t=1}^T \epsilon_t h(x^{(t)}(\mathbf{\epsilon}))\Big],
    \]
    and
    \[
    \mathfrak{R}_T(\cH) = \sup_{\underline{X}} \mathfrak{R}_T(\cH,\underline{X}),
    \]
    where the outer supremum is taken over all $\cX$-valued trees of depth $T$; $\mathbf{\epsilon} = (\epsilon_1,\ldots,\epsilon_T) \in \{\pm 1\}^T$ is a sequence of i.i.d. Rademacher random variables.
\end{definition}

\begin{definition}[The loss class $\lossf_{\mathbb{I}} \circ \cH$] \label{def:loss-class}
    The loss class, $\lossf_{\mathbb{I}} \circ \cH$, is a boolean hypothesis class given by
    \[
    \lossf_{\mathbb{I}} \circ \cH = \{ l_h : (x,y) \mapsto \indf[h(x) \neq y] \, | \, h \in \cH \}.
    \]
\end{definition}

Note that $\lossf_{\mathbb{I}} \circ \cH \subseteq \{0,1\}^{\cX \times \cY}$, and that its sequential Rademacher complexity is defined analogously to Definition \ref{def:seq-rad-H}. With these definitions in place, we can proceed to present the theorems for the bounds on expected regret for the adversary-provides-a-distribution model in the agnostic setting.

\begin{theorem}[Upper bound on the expected regret for the classical adversary-provides-a-distribution model in the agnostic setting] \label{thm:regret-upper-bdd-dist}
Let $\cH \subseteq \{0,1\}^\cX$ be a hypothesis class. For every adversary, there exists a classical online learner for $\cH$ that satisfies
\[
\Expect[\regretf_P(\mathbf{h},\vecD,\cH)] = \bigO(\sqrt{\lDim(\cH)\cdot T}).
\]
\end{theorem}

\begin{proof}
Let $\vecD$ be an arbitrary sequence of distributions, and let $\vecz = (z_1,\ldots,z_t)$ be a sequence of instances such that $z_i \sim D_i$. Then, defining\footnote{$\regretf_\mathbb{I}(\mathbf{h},\vecz,\cH)$ is precisely the regret of an algorithm in the agnostic generalization of the canonical (adversary-provides-an-input) classical online model in Section \ref{model:classical-adversary-provides-an-input}.} $\regretf_\mathbb{I}(\mathbf{h},\vecz,\cH) := \sum_{t=1}^T \lossf_\mathbb{I}(h_t,z_t) - \inf_{h \in \cH} \sum_{t=1}^T \lossf_\mathbb{I}(h,z_t)$, we have
\begin{align*}
    \regretf_P(\mathbf{h},\vecD,\cH) &= \regretf_P(\mathbf{h},\vecD,\cH) - \regretf_\mathbb{I}(\mathbf{h},\vecz,\cH) + \regretf_\mathbb{I}(\mathbf{h},\vecz,\cH) \\
    &= \underbrace{\sum_{t=1}^T \cL_P(h_t,D_t) - \sum_{t=1}^T \cL_\mathbb{I}(h_t,z_t)}_{\Delta_1} + \underbrace{\inf_{h \in \cH} \sum_{t=1}^T \cL_\mathbb{I}(h,z_t) - \inf_{h \in \cH} \sum_{t=1}^T \cL_P(h,D_t)}_{\Delta_2} + \regretf_\mathbb{I}(\mathbf{h},\vecz,\cH).
\end{align*}
We proceed by bounding the expected value of $\Delta_1$, $\Delta_2$ and $\regretf_\mathbb{I}(\mathbf{h},\vecz,\cH)$ separately.

Working with $\Delta_1$, we let $M_t := \cL_P(h_t,D_t) - \cL_\mathbb{I}(h_t,z_t)$. With the filtration $\mathbb{F} := (\cF_t)_{t=1}^T$, where $\cF_t$ corresponds to the information revealed up to (and, including) round $t$, namely $\vech|_t$, $\vecD|_t$ and $\vecz|_t$, we note that $\mathbf{M} := (M_t)_{t=1}^T$ is adapted to $\mathbb{F}$ and $\forall t$,
\begin{align*}
    \Expect[M_t] &= \Expect[\cL_P(h_t,D_t)] - \Expect[\cL_\mathbb{I}(h_t,z_t)] < \infty \\
    \Expect[M_t|\cF_{t-1}] &= \Expect[\cL_P(h_t,D_t)|\cF_{t-1}] - \Expect[\cL_\mathbb{I}(h_t,z_t)|\cF_{t-1}] = \cL_P(h_t,D_t) - \cL_P(h_t,D_t) = 0,
\end{align*}
where the first line is due to the boundedness ($0 \leq \cL_P(h_t,D_t), \cL_\mathbb{I}(h_t,z_t) \leq 1, \; \forall t$) of $\cL_P(h_t,D_t)$ and $\cL_\mathbb{I}(h_t,z_t)$. And, the second line is due to $\Expect[\cL_\mathbb{I}(h_t,z_t)|\cF_{t-1}] = 1 \cdot \Prob_{z_t \sim D_t} (h_t(x_t) \neq y_t) + 0 \cdot \Prob_{z_t \sim D_t} (h_t(x_t) = y_t) = \cL_P(h_t,D_t)$.
Therefore, $\mathbf{M}$ is a martingale difference sequence. Now, as $\Delta_1 = \sum_{t=1}^T M_t$, by Azuma-Hoeffding's inequality, since $|M_t| < 1$ for all $t$, we have that
\begin{equation}\label{eq:prob-bound-ub-boolean-4-2}
\Prob[ |\Delta_1| \geq \delta] \leq 2\exp\Big(-\frac{\delta^2}{2T}\Big), \;\; \text{for all} \;\; \delta \in \reals^+, T \in \mathbb{Z}^+.
\end{equation}

This allows us to compute the following bound on $\Expect[\Delta_1]$:
\begin{align*}
    \Expect[\Delta_1] \leq \Expect[|\Delta_1|] & \leq \int_0^\infty \Prob[|\Delta_1| \geq \delta] \, dB \\
    & \leq \int_0^\infty 2\exp\Big(-\frac{\delta^2}{2T}\Big) \, dB = \sqrt{2\pi T} = \bigO(\sqrt{T}),
\end{align*}
where the first line holds as $\Delta_1 < |\Delta_1|$ (for the first inequality) and $|\Delta_1| \geq 0$ (for the second inequality), and the second line uses the bound in (\ref{eq:prob-bound-ub-boolean-4-2}).

Next, working with $\Delta_2$, we obtain the following chain of (in)equalities:
\begin{align*}
    \Delta_2 &= \inf_{h \in \cH} \sum_{t=1}^T \cL_\mathbb{I}(h,z_t) - \inf_{h \in \cH} \sum_{t=1}^T \cL_P(h,D_t) \\
    & = - \sup_{h \in \cH} \sum_{t=1}^T -\cL_\mathbb{I}(h,z_t) + \sup_{h \in \cH} \sum_{t=1}^T -\cL_P(h,D_t)\\
    & = - \sup_{h \in \cH} \sum_{t=1}^T (-\cL_\mathbb{I}(h,z_t) + \cL_P(h,D_t) - \cL_P(h,D_t)) + \sup_{h \in \cH} \sum_{t=1}^T -\cL_P(h,D_t)\\
    &\leq \sup_{h \in \cH} \sum_{t=1}^T (\cL_\mathbb{I}(h,z_t) - \cL_P(h,D_t)),
\end{align*}
where the final inequality is a result of the subadditivity of the supremum; to elaborate, we derive $\sup_h f \leq \sup_h (f-g) + \sup_h g \iff \sup_h f - \sup_h g \leq \sup_h (f - g)$, to which we substitute $f = -\cL_P(h,D_t)$ and $g = -\cL_\mathbb{I}(h,z_t) + \cL_P(h,D_t) - \cL_P(h,D_t)$. So far, taking expectations both sides, we have $\Expect[\Delta_2] \leq \Expect[\sup_{h \in \cH} \sum_{t=1}^T (\cL_\mathbb{I}(h,z_t) - \cL_P(h,D_t))]$. 
Next, we bound the in-expectation quantity on the right-hand side to obtain,
\begin{align*}
\Expect\Big[\sup_{h \in \cH} \sum_{t=1}^T (\cL_\mathbb{I}(h,z_t) - \cL_P(h,D_t))\Big] &= \Expect \Bigg[ \sup_{h \in \cH} \sum_{t=1}^T \Big( \indf[h(x_t) \neq y_t] - \Prob_{(x_t,y_t) \sim D_t}[h(x_t) \neq y_t]  \Big) \Bigg] \\
&= \Expect \Bigg[ \sup_{h \in \cH} \sum_{t=1}^T \Big( \indf[h(x_t) \neq y_t] - \Expect_{(x_t,y_t)\sim D_t}\Big[\indf[h(x_t) \neq y_t]|\cF_{t-1}\Big] \Big) \Bigg]\\
& \leq 2T \cdot \mathfrak{R}_T(\lossf_{\mathbb{I}} \circ \cH)\\
& \leq 2T \cdot \mathfrak{R}_T(\cH)\\
& = \bigO(\sqrt{\lDimH \cdot T}),
\end{align*}
where the second line uses a conditioning on the filtration $\cF_{t-1}$, which corresponds to the information revealed up to (and, including) round $t-1$ (namely $\vech|_{t-1}:= (h_1,\ldots,h_{t-1})$, $\vecD|_{t-1}:=(D_1,\ldots,D_{t-1})$ and $\vecz|_{t-1}:=((x_1,y_1),\ldots,(x_{t-1},y_{t-1}))$), the third line is due to Theorem 2 in \cite{rakhlin2015sequential}\footnote{Theorem 2 in \cite{rakhlin2015sequential}, as stated, provides a bound on $\Expect \Big[ \sup_{h \in \cH} \sum_{t=1}^T \Big(\Expect_{(x_t,y_t)\sim D_t}\Big[\indf[h(x_t) \neq y_t]|\cF_{t-1}\Big] - \indf[h(x_t) \neq y_t] \Big) \Big]$. However, the proof of Lemma 9 in \cite{rakhlin2015sequential} notes its validity even with absolute values around the sum, which subsequently ensures that Theorem 2 in \cite{rakhlin2015sequential} also holds in the same generality. This justifies its use here in the following sense: $\Expect \Big[ \sup_{h \in \cH} \sum_{t=1}^T \Big( \indf[h(x_t) \neq y_t] - \Expect_{(x_t,y_t)\sim D_t}\Big[\indf[h(x_t) \neq y_t]|\cF_{t-1}\Big] \Big) \Big] \leq \Expect \Big[ \sup_{h \in \cH} \Big| \sum_{t=1}^T \Big(\Expect_{(x_t,y_t)\sim D_t}\Big[\indf[h(x_t) \neq y_t]|\cF_{t-1}\Big] - \indf[h(x_t) \neq y_t] \Big) \Big| \Big] \leq 2T \cdot \mathfrak{R}_T(\lossf_{\mathbb{I}} \circ \cH)$.\label{footnote:abs-value-thm2-rakhlin}}, the fourth line is due to Theorem 16 of \cite{dawid2022learnability} (with $\cY = \{0,1\}$) and the last line is from the proof of Theorem 12.1 in \cite{alon2021adversarial}.

Finally, also from Theorem 12.1 of \cite{alon2021adversarial}, we have $\regretf_\mathbb{I}(\mathbf{h},\vecz,\cH) = \bigO(\sqrt{\lDimH \cdot T})$. Putting everything together, using the linearity of expectation, we have,
\begin{align*}
\Expect[\regretf_P(\mathbf{h},\vecD,\cH)] &= \Expect[\Delta_1 + \Delta_2 + \regretf_\mathbb{I}(\mathbf{h},\vecz,\cH)]\\
&= \Expect[\Delta_1] + \Expect[\Delta_2] + \Expect[\regretf_\mathbb{I}(\mathbf{h},\vecz,\cH)]\\
&= \bigO(\sqrt{T}) + \bigO(\sqrt{\lDimH \cdot T}) + \bigO(\sqrt{\lDimH \cdot T})\\
&= \bigO(\sqrt{\lDimH \cdot T}).
\end{align*}
\end{proof}

In summary, our theorem reveals that the optimal classical learner for the canonical (adversary-provides-an-input) agnostic model, when provided with the observed sequence of instances $\mathbf{z}$ in the new protocol, experiences, at most, a constant overhead when assessed under the new (adversary-provides-a-distribution) framework. Upon closer examination, it was critical for the bound on $\Expect[\Delta_2]$ in our proof not to be worse than the bound on $\Expect[\regretf_\mathbb{I}(\mathbf{h},\vecz,\cH)]$. It is noteworthy that the bound on $\Expect[\Delta_2]$ is guaranteed by the rate of online uniform convergence (in the frameworks of the sequential Rademacher complexity \cite{rakhlin2015sequential} and the adversarial (uniform) laws of large numbers \cite{alon2021adversarial}), whereas the bound on $\Expect[\regretf_\mathbb{I}(\mathbf{h},\vecz,\cH)]$ is guaranteed by the rate of canonical (agnostic) online learnability. The equivalence between these two rates, for the boolean function class case, played a pivotal role in establishing our result.

Next, we establish (Theorem \ref{thm:regret-lower-bdd-dist}) a matching lower bound for expected regret within the agnostic adversary-provides-a-distribution framework. This fully characterizes agnostic learnability under the adversary-provides-a-distribution framework. As with all lower bound proofs within this framework, we efficiently conclude the proof statement by considering an adversary that exclusively plays point masses.

\begin{theorem}[Lower bound on the expected regret for the classical adversary-provides-a-distribution model in the agnostic setting] \label{thm:regret-lower-bdd-dist}
Let $\cH \subseteq \{0,1\}^\cX$ be a hypothesis class. For every classical online learner of $\cH$, there exists an adversary such that
\[
\Expect[\regretf_P(\mathbf{h}, \vecD, \cH)] = \Omega(\sqrt{\lDim(\cH)\cdot T}).
\]
\end{theorem}
\begin{proof}
As in Theorem \ref{thm:loss-lower-bdd-dist}, we again consider an adversary which chooses each $D_t$ to be a point mass on the instance space, i.e. the adversary simply chooses an instance $z_t$ at each $t$. Since each $z_t \sim D_t$ is deterministic, we have
\[
\Expect[\regretf_P(\mathbf{h}, \vecD, \cH)] = \Expect[\regretf_\mathbb{I}(\mathbf{h}, \vecz, \cH)] = \Omega(\sqrt{\lDim(\cH)\cdot T}),
\]
where the second equality is due to (the lower bound part of) Theorem 21.10 in \cite{shalev2014understanding}.
\end{proof}

\subsection{Adversary provides a distribution in the multiclass setting}\label{sec:classical-adversary-provides-a-distribution-mc}

In Sections \ref{model:classical-adversary-provides-an-input}, \ref{model:classical-adversary-provides-a-distribution} and \ref{model:classical-adversary-provides-a-distribution-agnostic}, we considered boolean hypothesis classes, i.e. $\cH \subseteq \{0,1\}^\cX$. Here, we consider the adversary-provides-a-distribution models in Sections \ref{model:classical-adversary-provides-a-distribution} and \ref{model:classical-adversary-provides-a-distribution-agnostic} extended to the setting of multiclass learning, i.e. $\cH \subseteq \cC := \{c : \cX \to \cY \}$, with $|\cY| > 2$. As stated earlier, the objective here is to lay the groundwork, with a clear understanding of these models in the classical paradigm, before delving into their quantum generalizations.

\subsubsection{Realizable setting}

The online learning protocol in the realizable setting is identical to that specified in Section \ref{model:classical-adversary-provides-a-distribution}, with the added specification of a multiclass hypothesis (and, concept) class. To express our results in this setting, we first define the combinatorial parameter, multiclass Littlestone dimension ($\mclDimH$), which is a generalization of the Littlestone dimension to the multiclass setting.

\begin{definition}[Multiclass Littlestone dimension]
Let $T$ be a rooted tree whose internal nodes are labeled by elements from $\cX$ and whose edges are labeled by elements from $\cY$, such that the edges from a single parent to its child nodes are each labeled with a different label\footnote{In the binary case (where the only ``different labels'' are 0 and 1), it is not hard to see that the definition reduces to that of the Littlestone dimension (Definition \ref{def:Ldim}).}. The tree $T$ is $\text{mcL}$-shattered by $\cH$ if, for every path from root to leaf which traverses the nodes $x_1,\ldots,x_d$, there exists a hypothesis $h \in \cH$ such that, for all $i$, $h(x_i)$ is the label of the edge $(x_i, x_{i+1})$. We define the multiclass Littlestone dimension, $\mclDim(\cH)$, to be the maximal depth of a complete \textbf{binary} tree that is $\text{mcL}$-shattered by $\cH$.
\end{definition}

\begin{theorem}[Upper bound on the expected loss for the classical adversary-provides-a-distribution model in the multiclass setting] \label{thm:loss-upper-bdd-dist-mc}
Let $\cH \subseteq \cY^\cX, \, \text{with} \; |\cY| = k > 2$, be a hypothesis class, and $h^\star \in \cH$. For every adversary, there exists a classical online learner for $\cH$ that satisfies
\[
\Expect[\lossf_P(\mathbf{h}, \vecD, h^\star)] = \bigO(\mclDim(\cH)).
\]
\end{theorem}

\begin{proof}
We follow the steps in the proof of Theorem \ref{thm:loss-upper-bdd-dist}, which continue to hold in the multiclass setting, with a minor difference: the upper bound on $\sum_{t=1}^T \lossf_\mathbb{I}(h_t,x_t,h^\star)$ is now $\mclDimH$ instead of $\lDimH$. As a result, we arrive at the following high-probability bound: with probability $1-\delta$ (for any $\delta > 0$), $\lossf_P(\mathbf{h},\mathbf{D}, h^\star) \leq \frac{1}{1-(e-2)} \mclDimH + \frac{1}{1-(e-2)} \log( \frac{1}{\delta}),$ or equivalently, the following tail bound: for any $\delta > 0$,
\[
\Prob\Big[\lossf_P(\mathbf{h},\mathbf{D}, h^\star) > \frac{1}{1-(e-2)} \cdot \mclDimH + \frac{\delta}{1-(e-2)}\Big] \leq e^{-\delta},
\]
from which we obtain the desired in-expectation result: $\Expect[\lossf_P(\mathbf{h},\mathbf{D}, h^\star)] = \bigO(\mclDimH)$.
\end{proof}

\begin{theorem}[Lower bound on the expected loss for the classical adversary-provides-a-distribution model in the multiclass setting] \label{thm:loss-lower-bdd-dist-mc}
Let $\cH \subseteq \cY^\cX, \, \text{with} \; |\cY| = k > 2$, be a hypothesis class, and $h^\star \in \cH$. For every classical online learner of $\cH$, there exists an adversary such that
\[
\Expect[\lossf_P(\mathbf{h}, \vecD, h^\star)] = \Omega(\mclDim(\cH)).
\]
\end{theorem}
\begin{proof}
As in the online lower bound proofs thus far, we consider an adversary which chooses each $D_t$ to be a point mass on the instance space, i.e. the adversary simply chooses an instance $x_t$ at each $t$. Since each $x_t \sim D_t$ is deterministic, we have
\[
\Expect[\lossf_P(\mathbf{h}, \vecD, h^\star)] = \Expect[\lossf_\mathbb{I}(\mathbf{h}, \vecx, h^\star)] = \Omega(\mclDim(\cH)),
\]
where the second equality is due to (the lower bound part of) Theorem 24 in \cite{daniely2015multiclass}.
\end{proof}

Theorems \ref{thm:loss-upper-bdd-dist-mc} and \ref{thm:loss-lower-bdd-dist-mc} together imply that $\mclDimH$ continues to characterize multiclass learnability in the realizable setting under the adversary-provides-a-distribution framework. Given that the rate $\Theta(\mclDimH)$ is independent of $k$, $\mclDimH$ characterizes realizable learnability even in the unbounded label space case, mirroring the scenario in the adversary-provides-an-input framework (Theorem 24, \cite{daniely2015multiclass}).

\subsubsection{Agnostic setting}

Here, the online learning protocol is identical to that specified in Section \ref{model:classical-adversary-provides-a-distribution-agnostic}, with the added specification of a multiclass hypothesis (and, concept) class. We introduce specific definitions and lemmas to facilitate the subsequent theorem, which provides an upper bound on the expected regret of an online learner in this multiclass agnostic adversary-provides-a-distribution setting. For notation related to trees, please refer to Section \ref{model:classical-adversary-provides-a-distribution-agnostic} (paragraph preceding Definition \ref{def:seq-rad-H}).

\begin{definition}[0-cover of a hypothesis class $\cH$ on a tree $\underline{X}$] \label{def:0-cover}
    A set $\cV$ of $\cY$-valued trees is a \textit{0-cover} of $\cH \subseteq \cY^\cX$ on an $\cX$-valued tree $\underline{X}$ of depth $T$ if
    \[
    \forall h \in \cH, \forall \epsilon \in \{\pm 1\}^T, \exists \, \underline{V} \in \cV, \; \text{s.t.}, \; v^{(t)}(\epsilon) = h(x^{(t)}(\epsilon)),
    \]
    for all $t \in [T]$.
\end{definition}

\begin{definition}[Covering number of a hypothesis class $\cH$ on a tree $\underline{X}$]\label{def:covering-number}
    The covering number of a hypothesis class $\cH \subseteq \cY^\cX$ on an $\cX$-valued tree $\underline{X}$, $\mathcal{N}(\cH,\underline{X})$, is defined as follows:
    \[
        \mathcal{N}(\cH,\underline{X}) := \min \{|\cV| : \cV \; \text{is a 0-cover of $\cH$ on} \; \underline{X} \},
    \]
    i.e. the size of the smallest set $\cV$ (of trees) that 0-covers $\underline{X}$.
\end{definition}

\begin{lemma}[$\cN(\lossf_{\mathbb{I}} \circ \cH, \underline{Z}) \leq \cN(\cH, \underline{X})$] \label{lem:bound-bw-0-covers-loss-class}
Let $\cH \subseteq \cY^\cX$. Let $\underline{Z}$ be a $(\cX \times \cY)$-valued tree,\footnote{The nodes of $\underline{Z}$ are of the form $z = (x,y)$, where $x \in \cX$ and $y \in \cY$.} and let $\underline{X}$ be the $\cX$-valued tree obtained by extracting the $\cX$-component from each node of $\underline{Z}$. Then,
\[
\cN(\lossf_{\mathbb{I}} \circ \cH, \underline{Z}) \leq \cN(\cH, \underline{X}).
\]
\end{lemma}
\begin{proof}
Let the set $\cV$ be the smallest set of trees that form a 0-cover of $\cH$ on $\underline{X}$. For each tree $\underline{V} \in \cV$, we construct (and add to $\mathcal{W}$) a tree $\underline{W}$, given by
\[
w^{(t)}(\epsilon) = \begin{cases} 0, \quad & \text{if} \; \; v^{(t)}(\epsilon) = y^{(t)}(\epsilon)\\
1, \quad & \text{otherwise} \end{cases},
\]
where $y^{(t)}(\epsilon)$ is the $y$-label of the node of $\underline{Z}$ encountered after having traversed the path $(\epsilon_1,\ldots,\epsilon_{t-1})$. We claim that $\mathcal{W}$ provides a 0-cover of $\lossf_{\mathbb{I}} \circ \cH$ on $\underline{Z}$.

As the set $\cV$ of trees forms a 0-cover of $\cH$ on $\underline{X}$, we know that 
\[
\forall h \in \cH, \forall \epsilon \in \{\pm 1\}^T, \exists \, \underline{V} \in \cV, \; \text{s.t.}, \; v^{(t)}(\epsilon) = h(x^{(t)}(\epsilon)),
\]
for all $t \in [T]$. Therefore, by construction,
\[
\forall h \in \cH, \forall \epsilon \in \{\pm 1\}^T, \exists \, \underline{W} \in \mathcal{W}, \; \text{s.t.}, \; w^{(t)}(\epsilon) = \indf[h(x^{(t)}(\epsilon)) \neq y^{(t)}(\epsilon)],
\]
for all $t \in [T]$. So, we obtain
\[
\cN(\lossf_{\mathbb{I}} \circ \cH, \underline{Z}) \leq |\mathcal{W}| = |\cV| = \cN(\cH, \underline{X}),
\]
completing our proof.
\end{proof}

\begin{lemma} [$\cN(\cH, \underline{X}) \leq (Tk)^{\mclDimH}$] \label{lem:bound-on-0-cover-mc}
Let $\cH \subseteq \cY^\cX$ with $|\cY| = k > 2$, and $\underline{X}$ be an $\cX$-valued tree. Then,
\[
\cN(\cH, \underline{X}) \leq (Tk)^{\mclDimH}.
\]
\end{lemma}

\begin{proof}
Our main idea is that the set of experts, as defined in the proof of Theorem 25 in \cite{daniely2015multiclass} (reproduced below), can be used to construct a 0-cover of $\cH$ on $\underline{X}$.

\begin{quote}
{Given time horizon $T$, let $A_T = \{A \subset [T] \; | \; |A| \leq \mclDimH \}$. For every $A \in A_T$ and $\phi : A \to \cY$, we define an expert $E_{A,\phi}$. The expert $E_{A,\phi}$ imitates the SOA algorithm when it errs exactly on the examples $\{x_t \; | \; t \in A\}$ and the true labels of these examples are determined by $\phi$. Formally, the expert $E_{A,\phi}$ proceeds as follows:
\begin{quote}
{
    Set $V_1 = \cH$.\\
    For $t=1,2,\ldots,T$:\\
    \hspace*{5ex} Receive $x_t$.\\
    \hspace*{5ex} If $t \in A$, set $\hat{y}_t = \phi(t)$.\\
    \hspace*{5ex} If $t \not\in A$, set $\hat{y}_t = \argmax_{y \in \cY} \mclDim(\{h \in V_t: h(x_t) = y\})$.\\
    \hspace*{5ex} Predict $\hat{y}_t$ and update $V_{t+1} = \{h \in V_t : h(x_t) = \hat{y}_t\}$.
}
\end{quote}
}
\end{quote}

Throughout this proof, we use the notation $E_{A,\phi}(x_1, \ldots, x_t)$ to denote the label $\hat{y}_t$ predicted at round $t$ by the expert $E_{A,\phi}$ after processing the sequence of instances $(x_1, \ldots, x_t)$. Similarly, given any true labeling function $h^\star \in \cH$, we use the notation $\mathrm{SOA}^{h^\star}(x_1, \ldots, x_t)$ to denote the label predicted at round $t$ by the \textit{standard} SOA algorithm, after processing the sequence of instances $(x_1, \ldots, x_t)$ and updating its version space based on the labels $h^\star(x_1), \ldots, h^\star(x_{t-1})$.

Now, the set of experts $\mathcal{E} = \{E_{A,\phi}\}$ has size $|\mathcal{E}| = \sum_{j=0}^{\mclDimH} \binom{T}{j} k^j \leq (Tk)^{\mclDimH}$ with the following \textit{expert guarantee}:
\begin{quote}
For any sequence $(x_1, \ldots, x_T)$ of instances, and any $h \in \cH$, there exists an expert $E^\star \in \mathcal{E}$ such that $E^\star(x_1,\ldots,x_t) = h(x_t), \, \forall \, t \in [T]$.
\end{quote}

Indeed, given an arbitrary sequence of instances $(x_1, \ldots, x_T)$ and an arbitrary $h \in \cH$, the expert $E^\star = E_{A^\star, \phi^\star}$ that satisfies the guarantee corresponds to $A^\star := \{t \in [T] : \mathrm{SOA}^h(x_1, \ldots, x_t) \neq h(x_t)\}$ and $\phi^\star: A^\star \to \cY$ defined by $\phi^\star(t) := h(x_t)$. Since the SOA algorithm makes at most $\mclDimH$ mistakes, we have $|A^\star| \leq \mclDimH$, and therefore $E^\star \in \mathcal{E}$ by construction.

\vspace*{2ex}

Next, for each expert $E \in \mathcal{E}$, we add a tree $\underline{V_E}$ to $\mathcal{V}$, given by
\[
\underline{V_E}^{(t)}(\epsilon) = E(x^{(1)}(\epsilon),\ldots,x^{(t)}(\epsilon)), \quad \forall \, t \in [T],
\]
where $\epsilon = (\epsilon_1,\ldots,\epsilon_T) \in \{\pm 1\}^T$ is a sequence of i.i.d. Rademacher random variables. We now verify that the set of trees $\mathcal{V}$ forms a 0-cover of $\cH$ on $\underline{X}$.

\vspace*{2ex}

Fix an arbitrary $h \in \cH$ and an arbitrary $\epsilon = (\epsilon_1,\ldots,\epsilon_T) \in \{\pm 1\}^T$. For the sequence of examples on the path $(x^{(1)}(\epsilon),\ldots,x^{(T)}(\epsilon))$, by the expert guarantee and our construction of $\mathcal{V}$ above, we have that there exists $E^\star \in \mathcal{E}$ such that $\underline{V_{E^\star}}^{(t)}(\epsilon) = E^\star(x^{(1)}(\epsilon),\ldots,x^{(t)}(\epsilon)) = h(x^{(t)}(\epsilon)), \, \forall t \in [T]$. Therefore, by Definition \ref{def:0-cover}, we see that $\mathcal{V}$ forms a 0-cover of $\cH$ on $\underline{X}$. Hence,
\[
\cN(\cH, \underline{X}) = \min \{|\cV| : \cV \; \text{is a 0-cover of $\cH$ on} \; \underline{X} \} \leq |\cV| = |\mathcal{E}| \leq (Tk)^{\mclDimH},
\]
as desired.
\end{proof}

\begin{theorem}[Upper bound on the expected regret for the classical adversary-provides-a-distribution model in the multiclass agnostic setting] \label{thm:regret-upper-bdd-dist-mc}
Let $\cH \subseteq \cY^\cX, \, \text{with} \; |\cY| = k > 2$, be a hypothesis class, and $h^\star \in \cH$. For every adversary, there exists a classical online learner for $\cH$ that satisfies
\[
\Expect[\regretf_P(\mathbf{h}, \vecD, \cH))] = \bigO(\sqrt{\mclDim(\cH) \cdot T\log(Tk)}).
\]
\end{theorem}

\begin{proof}
    We follow the steps in the proof of Theorem \ref{thm:regret-upper-bdd-dist}, which, in a general sense, are applicable in the multiclass setting. However, a key bound used in the proof of Theorem \ref{thm:regret-upper-bdd-dist} ($\mathfrak{R}_T(\lossf_{\mathbb{I}} \circ \cH) \leq \mathfrak{R}_T(\cH)$) is not known to continue to hold in the multiclass setting, forcing us to handle the rest of the proof differently. Some interesting insights follow from this deviation, which is elaborated in the proof below, as well as the discussion that follows.

    We recall the preliminaries. Let $\vecD$ be an arbitrary sequence of distributions, and let $\vecz = (z_1,\ldots,z_t)$ be a sequence of instances such that $z_i \sim D_i$. Then, defining\footnote{$\regretf_\mathbb{I}(\mathbf{h},\vecz,\cH)$ is the regret of an algorithm in the \textit{multiclass agnostic} generalization of the canonical (adversary-provides-an-input) classical online model in Section \ref{model:classical-adversary-provides-an-input}.} $\regretf_\mathbb{I}(\mathbf{h},\vecz,\cH) = \sum_{t=1}^T \lossf_\mathbb{I}(h_t,z_t) - \inf_{h \in \cH} \sum_{t=1}^T \lossf_\mathbb{I}(h,z_t)$, we have
\begin{align*}
    \regretf_P(\mathbf{h},\vecD,\cH) &= \regretf_P(\mathbf{h},\vecD,\cH) - \regretf_\mathbb{I}(\mathbf{h},\vecz,\cH) + \regretf_\mathbb{I}(\mathbf{h},\vecz,\cH) \\
    &= \underbrace{\sum_{t=1}^T \cL_P(h_t,D_t) - \sum_{t=1}^T \cL_\mathbb{I}(h_t,z_t)}_{\Delta_1} + \underbrace{\inf_{h \in \cH} \sum_{t=1}^T \cL_\mathbb{I}(h,z_t) - \inf_{h \in \cH} \sum_{t=1}^T \cL_P(h,D_t)}_{\Delta_2} + \regretf_\mathbb{I}(\mathbf{h},\vecz,\cH).
\end{align*}
We proceed by bounding the expected value of $\Delta_1$, $\Delta_2$ and $\regretf_\mathbb{I}(\mathbf{h},\vecz,\cH)$ separately. First, our bound $\Expect[\Delta_1] \leq \bigO(\sqrt{T})$ using Azuma-Hoeffding's inequality in Theorem \ref{thm:regret-upper-bdd-dist} is independent of the form of $\cH$ and, in particular, continues to hold for a multiclass $\cH$.

Next, with regard to $\Expect[\Delta_2]$, we recover the chain of inequalities in Theorem \ref{thm:regret-upper-bdd-dist} leading up to
\begin{equation} \label{eq:ineq-before-two-lemmas}
\Expect[\Delta_2] \leq 2T \cdot \mathfrak{R}_T(\lossf_{\mathbb{I}} \circ \cH).
\end{equation}
However, the result in Theorem 16 of \cite{dawid2022learnability} ($\mathfrak{R}_T(\lossf_{\mathbb{I}} \circ \cH) \leq \mathfrak{R}_T(\cH)$) only applies when $\cH$ is a boolean hypothesis class. Therefore, we proceed with an explicit bound on $\mathfrak{R}_T(\lossf_{\mathbb{I}} \circ \cH)$ using a covering number argument. Let $\underline{Z}$ be an $(\cX \times \cY)$-valued tree, and let $\underline{X}$ be the $\cX$-valued tree obtained by extracting the $\cX$-component from each node of $\underline{Z}$. We provide a chain of inequalities starting from (\ref{eq:ineq-before-two-lemmas}):
\begin{align*}
    \Expect[\Delta_2] &\leq 2T \cdot \mathfrak{R}_T(\lossf_{\mathbb{I}} \circ \cH)\\
    & \leq \frac{24T\sqrt{\log \cN(\lossf_{\mathbb{I}} \circ \cH, \underline{Z})}}{\sqrt{T}}\\
    & \leq \frac{24T\sqrt{\log \cN(\cH, \underline{X})}}{\sqrt{T}}\\
    & \leq \frac{24T\sqrt{\mclDimH \cdot \log(Tk)}}{\sqrt{T}}\\
    & = \bigO(\sqrt{\mclDim(\cH) \cdot T \log (Tk)}),
\end{align*}
where the second line is from Theorem 3 and Definition 5 of \cite{rakhlin2015sequential}, the third line is from Lemma \ref{lem:bound-bw-0-covers-loss-class}, and the fourth line is from Lemma \ref{lem:bound-on-0-cover-mc}. Finally, from Theorem 4 of \cite{hanneke2023multiclass}, we have $\regretf_\mathbb{I}(\mathbf{h},\vecz,\cH) = \bigO\Big(\sqrt{\mclDimH \cdot T \log\Big( {\frac{T}{\mclDimH}}\Big)}\Big)$, when $T \geq 2 \cdot \mclDimH$. Putting everything together, using the linearity of expectation, we have,
\begin{align*}
\Expect[\regretf_P(\mathbf{h},\vecD,\cH)] &= \Expect[\Delta_1 + \Delta_2 + \regretf_\mathbb{I}(\mathbf{h},\vecz,\cH)]\\
&= \Expect[\Delta_1] + \Expect[\Delta_2] + \Expect[\regretf_\mathbb{I}(\mathbf{h},\vecz,\cH)]\\
&= \bigO(\sqrt{T}) + \bigO(\sqrt{\mclDim(\cH) \cdot T \log (Tk)})\\
& \qquad\qquad\;\; +\bigO\Bigg(\sqrt{\mclDimH \cdot T \log\Big( {\frac{T}{\mclDimH}}\Big)}\Bigg)\\
&= \bigO(\sqrt{\mclDim(\cH) \cdot T \log (Tk)}),
\end{align*}
where the third line holds as point-wise bounds imply bounds in-expectation\footnote{Since the point-wise bound holds for $T \geq 2 \cdot \mclDimH$, it is clear that $\Expect[\regretf_\mathbb{I}(\mathbf{h},\vecz,\cH)] \leq 2 \cdot \mclDimH + \bigO\Big(\sqrt{\mclDimH \cdot T \log\Big( {\frac{T}{\mclDimH}}\Big)}\Big)$, where the latter term dominates when $T \geq 4 \cdot \mclDimH$.}, and the last line holds as $k > 2$ and $\mclDimH \geq 1 \implies k \geq \frac{1}{\mclDimH}$.
\end{proof}

In summary, our theorem provides a $k$-dependent upper bound on $\Expect[\regretf_P(\mathbf{h},\vecD,\cH)]$, while the corresponding lower bound (presented next, Theorem \ref{thm:regret-lower-bdd-dist-mc}), is $k$-independent. Although the optimal classical learner for the canonical multiclass agnostic model bridges this gap, as indicated by the $k$-independent bound used on $\Expect[\regretf_\mathbb{I}(\mathbf{h},\vecz,\cH)]$, it remains unclear whether we can establish a $k$-\textit{independent} upper bound on $\Expect[\regretf_P(\mathbf{h},\vecD,\cH)]$. Our current proof strategy\footnote{
Currently, we employ the optimal classical learner for the canonical multiclass agnostic model by presenting it with the observed sequence of instances $\vecz$ in the adversary-provides-a-distribution protocol and evaluate its performance under the adversary-provides-a-distribution framework.} faces challenges in achieving this goal due to the following observation. The bound on $\Expect[\Delta_2]$ is guaranteed by the rate of online uniform convergence of the loss ($\lossf_\mathbb{I} \circ \cH$) class (in the frameworks of sequential Rademacher complexity \cite{rakhlin2015sequential} and adversarial (uniform) laws of large numbers \cite{alon2021adversarial}). Meanwhile, the bound on $\Expect[\regretf_\mathbb{I}(\mathbf{h},\vecz,\cH)]$ is guaranteed by the rate of canonical (agnostic) online learnability of $\cH$. However, in the multiclass function class case, as demonstrated by Theorem 7 and Example 1 in \cite{hanneke2023multiclass}, these two rates are \textit{not} equivalent.

\begin{theorem}[Lower bound on the expected regret for the classical adversary-provides-a-distribution model in the multiclass agnostic setting] \label{thm:regret-lower-bdd-dist-mc}
Let $\cH \subseteq \cY^\cX, \, \text{with} \; |\cY| = k > 2$, be a hypothesis class. For every classical online learner of $\cH$, there exists an adversary such that
\[
\Expect[\regretf_P(\mathbf{h}, \vecD, \cH)] = \Omega(\sqrt{\mclDim(\cH)\cdot T}).
\]
\end{theorem}
\begin{proof}
As in the online lower bound proofs thus far, we consider an adversary which chooses each $D_t$ to be a point mass on the instance space, i.e., the adversary simply chooses an instance $z_t$ at each $t$. Since each $z_t \sim D_t$ is deterministic, we have
\[
\Expect[\regretf_P(\mathbf{h}, \vecD, \cH)] = \Expect[\regretf_\mathbb{I}(\mathbf{h}, \vecz, \cH)] = \Omega(\sqrt{\mclDim(\cH)\cdot T}),
\]
where the second equality is due to Theorem 26 in \cite{daniely2015multiclass}.
\end{proof}

Due to Theorems \ref{thm:regret-upper-bdd-dist-mc} and \ref{thm:regret-lower-bdd-dist-mc}, we have characterized multiclass agnostic learnability in the adversary-provides-a-distribution setting for the \textit{bounded} label space ($k < \infty$) case.

\section{Quantum Online Learning} \label{sec:quantum_online}

Equipped with our models in Sections \ref{model:classical-adversary-provides-a-distribution} and \ref{model:classical-adversary-provides-a-distribution-agnostic}, we are finally ready to introduce our quantum online learning model. However, prior to the model description, we clarify our scope. In its nascent existence, quantum online learning has primarily focused on the online learning of \textit{quantum states} \cite{aaronson2018online, quek2021private, anshu2024survey}. In contrast, our focus in this paper is on the online learning of \textit{classical functions} via quantum examples. Our scope is motivated by the abundance of classical online learning literature \cite{littlestone1988learning, ben2009agnostic, daniely2015multiclass, shalev2014understanding}, as well as our results in Sections \ref{model:classical-adversary-provides-a-distribution}, \ref{model:classical-adversary-provides-a-distribution-agnostic} and \ref{sec:classical-adversary-provides-a-distribution-mc}, that presents us with an at-the-ready comparison.

\subsection{Model Description} \label{sec:quantum-online-model}

Let $\cH \subseteq \cY^{\cX}$ be a hypothesis class. Identifying the $T$-round protocol in Section \ref{model:classical-adversary-provides-a-distribution} (corresp. Section \ref{model:classical-adversary-provides-a-distribution-agnostic}) with the definition of a quantum example in (\ref{example:quantum-realizable}) (corresp. (\ref{example:quantum-agnostic})), we obtain the following ``natural'' model for quantum online learning. The $T$-round protocol proceeds as follows: at the $t$-th round,
\begin{enumerate}[noitemsep]
    \item Learner provides a hypothesis $h_t: \cX \to \cY$.
    \item Adversary reveals an example $\ket{\psi_t}$ where
    \begin{enumerate}[noitemsep]
        \item $\ket{\psi_t} = \sum_{x \in \cX} \sqrt{D_t(x)} \ket{x,h^\star(x)}$ for some $D_t : \cX \to [0,1]$ and $h^\star \in \cH$ (realizable),
        \item $\ket{\psi_t} = \sum_{x \in \cX, \, y \in \cY} \sqrt{D_t(x,y)} \ket{x,y}$ for some $D_t : \cX \times \cY \to [0,1]$ (agnostic\footnote{As in the classical case, the adversary need not be consistent: i.e., they could reveal, for e.g., both $\ket{x,0}$ and $\ket{x,1}$ during the $T$-round protocol.}).
    \end{enumerate}
    \item Learner incurs loss\footnote{As an aside, for those who favor a mistake model, it is possible to define it by specifying a threshold $\epsilon$. In this case, a mistake occurs in a round iff $\lossf_P > \epsilon$, i.e. $\lossf_\mathbb{I}^\epsilon = \indf[\lossf_P > \epsilon]$. We do not investigate this mistake model.} 
    \begin{enumerate}[noitemsep]
        \item $\lossf_P(h_t,D_t,h^\star) := \Prob_{x \sim D_t} (h_t(x) \neq h^\star(x))$ (realizable),
        \item $\lossf_P(h_t,D_t) := \Prob_{(x,y) \sim D_t} (h_t(x) \neq y)$ (agnostic).
    \end{enumerate}
\end{enumerate}
As in the classical adversary-provides-a-distribution models, the learner's total loss in the realizable case continues to be given by $\lossf_P(\vech,\vecD,h^\star) = \sum_{t=1}^T \lossf_P(h_t,D_t,h^\star)$ (ref. (\ref{eq:prob_loss})), while in the agnostic case, the learner's total regret continues to be expressed as $\regretf_P(\vech,\vecD,\cH) = \sum_{t=1}^T \lossf_P(h_t,D_t) - \inf_{h \in \cH} \sum_{t=1}^T  \lossf_P(h,D_t)$ (ref. (\ref{eq:regret-loss})).

For this model, we formally define what a quantum online learner, a quantum adversary, and quantum online learnability entails. While these definitions are analogous to Definitions \ref{def:classical-online-learner-adversary-dist}, \ref{def:classical-online-adversary-adversary-dist}, and \ref{def:classical-online-learnability-adversary-dist}, we present them here for the sake of completeness.

\begin{definition}[Quantum online learner]\label{def:quantum-online-learner} An algorithm $\cA$ is a quantum online learner for a hypothesis class $\cH$ if having received a sequence of quantum examples $(\ket{\psi_i})_{i=1}^t$ (of the form in 2. (a) or 2. (b) of Section \ref{sec:quantum-online-model}) over the first $t$ rounds, $\cA$ outputs a hypothesis $h_{t+1}:\cX \to \cY$ at round\footnote{Prior to receiving any examples, $\cA$ outputs some arbitrary hypothesis $h_1 : \cX \to \cY$ at round 1.} $t+1$.
\end{definition}

\begin{definition}[Quantum adversary]\label{def:quantum-online-adversary} Having received a sequence of hypothesis $\vech|_t = (h_1,\ldots,h_t)$ from the learner, and with knowledge of its own prior choices of quantum examples, $(\ket{\psi_i})_{i=1}^t$, over the first $t$ rounds, at round $t+1$, a quantum (online) adversary chooses a distribution $D_{t+1}$ (on $\cX$ (realizable) or on $\cX \times \cY$ (agnostic)) and discloses the corresponding quantum example $\ket{\psi_{t+1}}$ (with consistent labeling throughout the protocol in the realizable case) to the learner.
\end{definition}

\begin{definition}[Quantum online learnability]\label{def:quantum-online-learnability}
A hypothesis class $\cH$ is quantum online learnable if there exists a quantum online learning algorithm $\cA$ such that
\begin{itemize}[noitemsep,nolistsep]
    \item $\lossf_P(\vech_\cA,\cH) = \sup_{\vecD, \, h^\star \in \cH} \Expect[\lossf_P(\vech_\cA,\mathbf{D}, h^\star)] = o(T)$ (realizable),
    \item $\regretf_P(\vech_\cA,\cH) = \sup_{\vecD} \Expect[\regretf_P(\vech_\cA,\vecD,\cH)] = o(T)$ (agnostic).
\end{itemize}
\end{definition}

\subsection{Binary Classification}

Under the quantum online learning model described above in Section \ref{sec:quantum-online-model}, we bound the expected regret (in both the realizable and the agnostic cases) of a quantum online learner for a boolean hypothesis class.

\begin{theorem}[Lower bounds on expected loss/regret for quantum online binary classification] \label{thm:quantum-online-boolean-lb}
Let $\cH \subseteq \{0,1\}^\cX$, be a hypothesis class, and $h^\star \in \cH$. For every quantum  online learner of $\cH$, there exists a quantum adversary such that
\begin{align*}
    \Expect[\lossf_P(\vech, \vecD, h^\star)] & = \Omega(\lDim(\cH)) \; \quad \text{(realizable), and} \\
    \Expect[\regretf_P(\vech, \vecD, \cH)] & = \Omega(\sqrt{\lDim(\cH) \cdot T}) \; \quad \text{(agnostic).}
\end{align*}
\end{theorem}

\begin{proof} Let $\cA_Q$ be an arbitrary, but fixed, \textit{quantum} online learning algorithm for $\cH$. We proceed using a reduction argument. To do this, we examine the scenario where a classical adversary chooses $D_t$ to be a point mass for each $t$ (i.e. the adversary simply chooses an instance $x_t$ (realizable) or $z_t = (x_t,y_t)$ (agnostic) at each $t$), and analyze the loss/regret bound for the following classical learner $\cA_C$ that accesses $\cA_Q$ as a ``black box''. At the $t$-th round,
\begin{enumerate}[noitemsep]
\item $\cA_C$ provides hypothesis $h_t^Q$ (received from $\cA_Q$ in the previous round).
\item Adversary reveals $(x_t,y_t)$ to $\cA_C$ (in the realizable case, $y_t = h^\star(x_t)$ for some $h^\star \in \cH$).
\item $\cA_C$ state prepares $\ket{\psi_t} = \ket{x_t, y_t}$ and passes it as input to $\cA_Q$.
\item $\cA_Q$ outputs hypothesis $h_{t+1}^Q$ to $\cA_C$.
\end{enumerate}

We provide a bound first for the realizable case. Since $\cA_C$ plays $h_t^Q$ at each $t$, it is clear, for our setup, that $\lossf_P(\vech_{\cA_C}, \vecD, h^\star) = \lossf_\mathbb{I}(\vech_{\cA_C}, \vecx, h^\star) \leq \lossf_\mathbb{I}(\vech_{\cA_Q}, \vecx, h^\star) = \lossf_P(\vech_{\cA_Q}, \vecD, h^\star)$. Taking expectations, and noting that $\Expect[\lossf_P(\vech_{\cA_C}, \vecD, h^\star)] = \Omega(\lDimH)$ (from Theorem \ref{thm:loss-lower-bdd-dist}), we have shown $\Expect[\lossf_P(\vech_{\cA_Q}, \vecD, h^\star)] = \Omega(\lDimH)$. Since, $\cA_Q$ was chosen arbitrarily, we deduce that $\Expect[\lossf_P(\vech, \vecD, h^\star)] = \Omega(\lDim(\cH))$.

The agnostic case follows an identical argument; we obtain the following chain of (in)equalities, $\regretf_P(\vech_{\cA_C}, \vecD, \cH) = \regretf_\mathbb{I}(\vech_{\cA_C}, \vecz, \cH) \leq \regretf_\mathbb{I}(\vech_{\cA_Q}, \vecz, \cH) = \regretf_P(\vech_{\cA_Q}, \vecD, \cH)$. Taking expectations, and noting $\Expect[\regretf_P(\vech_{\cA_C}, \vecD, \cH)] = \Omega(\sqrt{\lDimH \cdot T})$ (from Theorem \ref{thm:regret-lower-bdd-dist}) and that $\cA_Q$ was chosen arbitrarily, we deduce $\Expect[\regretf_P(\vech, \vecD, \cH)] = \Omega(\sqrt{\lDimH \cdot T})$.
\end{proof}

\begin{theorem}[Upper bounds on expected loss/regret for quantum online binary classification] \label{thm:quantum-online-boolean-ub}
Let $\cH \subseteq \{0,1\}^\cX$, be a hypothesis class, and $h^\star \in \cH$. For every quantum adversary, there exists a quantum online learner for $\cH$ that satisfies
\begin{align*}
    \Expect[\lossf_P(\vech, \vecD, h^\star)] & = \bigO(\lDim(\cH)) \; \quad \text{(realizable), and} \\
    \Expect[\regretf_P(\vech, \vecD, \cH)] & = \bigO(\sqrt{\lDim(\cH) \cdot T}) \; \quad \text{(agnostic).}
\end{align*}
\end{theorem}
\begin{proof}
For a na\"ive algorithm that, at each round $t$, measures $\ket{\psi_t}$ in the standard basis and employs a classical learner to learn from the observed classical outputs, the desired upper bounds are guaranteed by Theorems \ref{thm:loss-upper-bdd-dist} and \ref{thm:regret-upper-bdd-dist}.
\end{proof}

\subsection{Multiclass Classification}

Here, we present bounds on the expected regret (in both the realizable and the agnostic cases) of a quantum online learner for a multiclass hypothesis class.

\begin{theorem}[Lower bounds on expected loss/regret for quantum online multiclass classification] \label{thm:quantum-online-multiclass-lb}
Let $\cH \subseteq \cY^\cX, \, \text{with} \; |\cY| = k > 2$, be a hypothesis class, and $h^\star \in \cH$. For every quantum  online learner of $\cH$, there exists a quantum adversary such that
\begin{align*}
    \Expect[\lossf_P(\vech, \vecD, h^\star)] & = \Omega(\mclDim(\cH)) \; \quad \text{(realizable), and} \\
    \Expect[\regretf_P(\vech, \vecD, \cH)] & = \Omega(\sqrt{\mclDim(\cH) \cdot T}) \; \quad \text{(agnostic).}
\end{align*}
\end{theorem}

\begin{proof}
The proof is identical to that of Theorem \ref{thm:quantum-online-boolean-lb}, where now, for the corresponding classical learners, $\Expect[\lossf_P(\vech_{\cA_C}, \vecD, h^\star)] = \Omega({\mclDim(\cH)})$ (from Theorem \ref{thm:loss-lower-bdd-dist-mc}), and $\Expect[\regretf_P(\vech_{\cA_C}, \vecD, \cH)] = \Omega(\sqrt{\mclDim(\cH) \cdot T})$ (from Theorem \ref{thm:regret-lower-bdd-dist-mc}).
\end{proof}

\begin{theorem}[Upper bounds on expected loss/regret for quantum online multiclass classification] \label{thm:quantum-online-multiclass-ub}
Let $\cH \subseteq \cY^\cX, \, \text{with} \; |\cY| = k > 2$, be a hypothesis class, and $h^\star \in \cH$. For every quantum adversary, there exists a quantum online learner for $\cH$ that satisfies
\begin{align*}
    \Expect[\lossf_P(\vech, \vecD, h^\star)] & = \bigO(\mclDim(\cH)) \; \quad \text{(realizable), and} \\
    \Expect[\regretf_P(\vech, \vecD, \cH)] & = \bigO(\sqrt{\mclDim(\cH) \cdot T\log(Tk)}) \; \quad \text{(agnostic).}
\end{align*}
\end{theorem}
\begin{proof}
Once again, we consider the measure-and-learn-classically quantum learner for which the desired upper bounds are now guaranteed by Theorems \ref{thm:loss-upper-bdd-dist-mc} and \ref{thm:regret-upper-bdd-dist-mc}.
\end{proof}

\subsection{Takeaways}\label{sec:takeaways}

Before we end this section on online learning with quantum examples, we note that the proofs for the expected regret upper bounds were established by a quantum online learner that performs a measurement and subsequently learns classically. The fact that the upper bounds thus obtained are \textit{identical} to the lower bounds, in all but one setting\footnote{The exception is the online multiclass agnostic case, where the quantum upper and lower bounds differ by a factor of $\sqrt{\log(Tk)}$.}, shows that the performance of this measure-and-learn-classically learner is as good as the best ``genuine'' quantum online learner in these settings. We feel that this is consistent with the overall message of this paper, viz., that there is limited power in quantum examples to speed up learning especially when the adversary is allowed to play arbitrary distributions (including very degenerate ones like point masses). 

Recently, \cite{hanneke2023multiclass} improved the classical upper bound for the online multiclass agnostic case to $\tilde{\bigO}(\sqrt{\mclDimH T})$\footnote{Here, $\tilde{\bigO}(\cdot)$ hides $\sqrt{\log\Big(\frac{T}{\mclDimH}\Big)}$ factors.}, which removes all $k$ dependence (cf. the $\sqrt{\log k}$ factor that appears in our corresponding \textit{quantum} upper bound in Theorem \ref{thm:quantum-online-multiclass-ub}). Meanwhile, we believe our analysis in the proof of Theorem \ref{thm:regret-upper-bdd-dist-mc} (which establishes the classical bound for the measure-and-learn-classically quantum learner in Theorem \ref{thm:quantum-online-multiclass-ub}) is tight, and so we suspect that any removal of the $k$-dependence in this setting would involve investigating into a ``genuine'' quantum online learning algorithm, which may involve a quantum-specific combinatorial parameter that characterizes learning. We identify this as an open question for future work.
\begin{itemize}
    \item What is the \textit{tight} expected regret bound for quantum online multiclass agnostic learning when the label space is unbounded (i.e., when the number of classes $k \to \infty$)?
\end{itemize}

\section{Conclusion}\label{sec:conclusion}

In this work, we partially resolved an open question of \cite{arunachalam2018optimal} by characterizing the sample complexity of multiclass learning (for $2 < k < \infty$). With recent work \cite{brukhim2022characterization} fully characterizing \textit{classical} multiclass learnability (including the case when $k \to \infty$) via the DS dimension, we ask whether \textit{quantum} multiclass learnability is also fully characterized by the DS dimension. We know that the upper bound in \cite{brukhim2022characterization} also holds in the quantum case by \textit{measure-and-learn-classically}. However, since the classical lower bound involving the DS dimension (Theorem 2 of \cite{daniely2014optimal}) uses \textit{transductive learning} which has no clear analog for a quantum example (ref. (\ref{example:quantum-realizable}) and (\ref{example:quantum-agnostic})), providing a quantum lower bound involving the DS dimension has proved to be non-trivial. We identify this as an open question for future work.
\begin{itemize}
    \item What is the \textit{tight} quantum sample complexity bound for batch multiclass learning, in both the realizable and agnostic settings,  when the label space is unbounded (i.e., when $k \to \infty$)?
\end{itemize}

In the batch setting, the sample complexity upper bounds were trivial to establish due to the quantum learner's ability to measure quantum examples and learn classically on the resulting output. In the online setting, the expected regret lower bounds, in turn, were trivial due to the adversary's ability to provide point masses $D_t$ at each $t$, rendering each quantum example equivalent to a classical example. This prompts us to ask the following question.
\begin{itemize}
    \item What happens when we impose restrictions on $D_t$ to force it away from a point mass? Would the expected regret bounds for the canonical classical online model (Section \ref{model:classical-adversary-provides-an-input}), classical adversary-provides-a-distribution model (Sections \ref{model:classical-adversary-provides-a-distribution} and \ref{model:classical-adversary-provides-a-distribution-agnostic}), and the quantum online model in Section \ref{sec:quantum-online-model} all diverge from one another?
\end{itemize}

\bibliographystyle{quantum}
\bibliography{refs}

\begin{thebibliography}{10}

\bibitem{arunachalam2018optimal}
Srinivasan Arunachalam and Ronald de~Wolf.
\newblock ``{Optimal Quantum Sample Complexity of Learning Algorithms}''.
\newblock Journal of Machine Learning Research {\bf 19}, 1--36~(2018).
\newblock
  url:~\href{http://jmlr.org/papers/v19/18-195.html}{jmlr.org/papers/v19/18-195.html}.

\bibitem{dawid2022learnability}
Philip Dawid and Ambuj Tewari.
\newblock ``{On Learnability under General Stochastic Processes}''.
\newblock \href{https://dx.doi.org/10.1162/99608f92.dec7d780}{Harvard Data
  Science Review\;{\bf 4}}~(2022).

\bibitem{bshouty1995learning}
Nader~H. Bshouty and Jeffrey~C. Jackson.
\newblock ``Learning {DNF} over the uniform distribution using a quantum
  example oracle''.
\newblock In Proceedings of the 8th Conference on Computational Learning Theory
  (COLT).
\newblock \href{https://dx.doi.org/10.1145/225298.225312}{Pages 118--127}.
\newblock ACM~(1995).

\bibitem{servedio2004equivalences}
Rocco~A. Servedio and Steven~J. Gortler.
\newblock ``{Equivalences and Separations Between Quantum and Classical
  Learnability}''.
\newblock \href{https://dx.doi.org/10.1137/S0097539704412910}{SIAM Journal on
  Computing {\bf 33}, 1067--1092}~(2004).

\bibitem{atici2005improved}
Alp At{\i}c{\i} and Rocco~A. Servedio.
\newblock ``{Improved Bounds on Quantum Learning Algorithms}''.
\newblock \href{https://dx.doi.org/10.1007/s11128-005-0001-2}{Quantum
  Information Processing {\bf 4}, 355--386}~(2005).

\bibitem{zhang2010improved}
Chi Zhang.
\newblock ``An improved lower bound on query complexity for quantum {PAC}
  learning''.
\newblock \href{https://dx.doi.org/10.1016/j.ipl.2010.10.007}{Information
  Processing Letters {\bf 111}, 40--45}~(2010).

\bibitem{daniely2015multiclass}
Amit Daniely, Sivan Sabato, Shai Ben-David, and Shai Shalev-Shwartz.
\newblock ``Multiclass {L}earnability and the {ERM} {P}rinciple''.
\newblock Journal of Machine Learning Research {\bf 16}, 2377--2404~(2015).
\newblock
  url:~\href{http://jmlr.org/papers/v16/daniely15a.html}{jmlr.org/papers/v16/daniely15a.html}.

\bibitem{brukhim2022characterization}
Nataly Brukhim, Daniel Carmon, Irit Dinur, Shay Moran, and Amir Yehudayoff.
\newblock ``{A Characterization of Multiclass Learnability}''.
\newblock In Proceedings of the 63rd Symposium on Foundations of Computer
  Science (FOCS).
\newblock \href{https://dx.doi.org/10.1109/FOCS54457.2022.00093}{Pages
  943--955}.
\newblock IEEE~(2022).
\newblock  \href{http://arxiv.org/abs/2203.01550}{arXiv:2203.01550}.

\bibitem{hanneke2023multiclass}
Steve Hanneke, Shay Moran, Vinod Raman, Unique Subedi, and Ambuj Tewari.
\newblock ``{Multiclass Online Learning and Uniform Convergence}''.
\newblock In Proceedings of 36th Conference on Learning Theory (COLT).
\newblock Pages 5682--5696.
\newblock PMLR~(2023).
\newblock
  url:~\href{https://proceedings.mlr.press/v195/hanneke23b.html}{proceedings.mlr.press/v195/hanneke23b.html}.

\bibitem{blumer1989learnability}
Anselm Blumer, Andrzej Ehrenfeucht, David Haussler, and Manfred~K. Warmuth.
\newblock ``Learnability and the {Vapnik-Chervonenkis} dimension''.
\newblock \href{https://dx.doi.org/10.1145/76359.76371}{Journal of the ACM {\bf
  36}, 929--965}~(1989).

\bibitem{hanneke2016optimal}
Steve Hanneke.
\newblock ``{The Optimal Sample Complexity of PAC Learning}''.
\newblock Journal of Machine Learning Research {\bf 17}, 1--15~(2016).
\newblock
  url:~\href{http://jmlr.org/papers/v17/15-389.html}{jmlr.org/papers/v17/15-389.html}.

\bibitem{kearns1992toward}
Michael~J. Kearns, Robert~E. Schapire, and Linda~M. Sellie.
\newblock ``Toward efficient agnostic learning''.
\newblock In Proceedings of the 5th Workshop on Computational Learning Theory
  (COLT).
\newblock \href{https://dx.doi.org/10.1145/130385.130424}{Pages 341--352}.
\newblock ACM~(1992).

\bibitem{talagrand1994sharper}
Michel Talagrand.
\newblock ``{Sharper Bounds for Gaussian and Empirical Processes}''.
\newblock \href{https://dx.doi.org/10.1214/aop/1176988847}{The Annals of
  Probability {\bf 22}, 28--76}~(1994).

\bibitem{natarajan1989learning}
Balas~K. Natarajan.
\newblock ``{On Learning Sets and Functions}''.
\newblock \href{https://dx.doi.org/10.1007/BF00114804}{Machine Learning {\bf
  4}, 67--97}~(1989).

\bibitem{bendavid1995characterizations}
Shai Ben-David, Nicolo Cesa-Bianchi, David Haussler, and Philip~M. Long.
\newblock ``Characterizations of {L}earnability for {C}lasses of $\{0,...,
  n\}$-{V}alued {F}unctions''.
\newblock \href{https://dx.doi.org/10.1006/jcss.1995.1008}{Journal of Computer
  and System Sciences {\bf 50}, 74--86}~(1995).

\bibitem{littlestone1988learning}
Nick Littlestone.
\newblock ``{Learning Quickly When Irrelevant Attributes Abound: A New
  Linear-Threshold Algorithm}''.
\newblock \href{https://dx.doi.org/10.1023/A:1022869011914}{Machine Learning
  {\bf 2}, 285--318}~(1988).

\bibitem{ben2009agnostic}
Shai Ben-David, D{\'a}vid P{\'a}l, and Shai Shalev-Shwartz.
\newblock ``{Agnostic Online Learning}''.
\newblock In Proceedings of the 22nd Conference on Learning Theory (COLT).
\newblock ~(2009).
\newblock
  url:~\href{https://api.semanticscholar.org/CorpusID:9403043}{api.semanticscholar.org/bendavid09agnostic}.

\bibitem{alon2021adversarial}
Noga Alon, Omri Ben-Eliezer, Yuval Dagan, Shay Moran, Moni Naor, and Eylon
  Yogev.
\newblock ``{Adversarial Laws of Large Numbers and Optimal Regret in Online
  Classification}''.
\newblock In Proceedings of the 53rd ACM SIGACT Symposium on Theory of
  Computing (STOC).
\newblock \href{https://dx.doi.org/10.1145/3406325.3451041}{Pages 447--455}.
\newblock ACM~(2021).
\newblock  \href{http://arxiv.org/abs/2101.09054}{arXiv:2101.09054}.

\bibitem{anshu2024survey}
Anurag Anshu and Srinivasan Arunachalam.
\newblock ``A survey on the complexity of learning quantum states''.
\newblock \href{https://dx.doi.org/10.1038/s42254-023-00662-4}{Nature Reviews
  Physics {\bf 6}, 59--69}~(2024).

\bibitem{arunachalam2017guest}
Srinivasan Arunachalam and Ronald de~Wolf.
\newblock ``{Guest Column: A Survey of Quantum Learning Theory}''.
\newblock \href{https://dx.doi.org/10.1145/3106700.3106710}{ACM SIGACT News
  {\bf 48}, 41--67}~(2017).

\bibitem{salmon2023provable}
Wilfred Salmon, Sergii Strelchuk, and Tom Gur.
\newblock ``{Provable Advantage in Quantum PAC Learning}''.
\newblock In Proceedings of the 37th Conference on Learning Theory (COLT).
\newblock Pages 4487--4510.
\newblock PMLR~(2024).
\newblock
  url:~\href{https://proceedings.mlr.press/v247/salmon24a.html}{proceedings.mlr.press/v247/salmon24a.html}.

\bibitem{arunachalam2020quantum}
Srinivasan Arunachalam, Aleksandrs Belovs, Andrew~M. Childs, Robin Kothari,
  Ansis Rosmanis, and Ronald de~Wolf.
\newblock ``{Quantum Coupon Collector}''.
\newblock In Proceedings of the 15th Conference on the Theory of Quantum
  Computation, Communication and Cryptography (TQC).
\newblock \href{https://dx.doi.org/10.4230/LIPIcs.TQC.2020.10}{Pages
  10:1--10:17}.
\newblock Schloss Dagstuhl -- Leibniz-Zentrum f{\"u}r Informatik~(2020).

\bibitem{nayak2024proper}
Ashwin Nayak and Pulkit Sinha.
\newblock ``{Proper vs Improper Quantum PAC Learning}''~(2024).
\newblock  \href{http://arxiv.org/abs/2403.03295}{arXiv:2403.03295}.

\bibitem{bousquet2020proper}
Olivier Bousquet, Steve Hanneke, Shay Moran, and Nikita Zhivotovskiy.
\newblock ``{Proper Learning, Helly Number, and an Optimal SVM Bound}''.
\newblock In Proceedings of the 33rd Conference on Learning Theory (COLT).
\newblock Pages 582--609.
\newblock PMLR~(2020).
\newblock
  url:~\href{https://proceedings.mlr.press/v125/bousquet20a.html}{proceedings.mlr.press/v125/bousquet20a.html}.

\bibitem{haghtalab2020smoothed}
Nika Haghtalab, Tim Roughgarden, and Abhishek Shetty.
\newblock ``{Smoothed Analysis of Online and Differentially Private
  Learning}''.
\newblock In Proceedings of the 34th Conference on Neural Information
  Processing Systems (NeurIPS).
\newblock Pages 9203--9215.
\newblock Curran Associates, Inc.~(2020).
\newblock
  url:~\href{https://proceedings.neurips.cc/paper\_files/paper/2020/hash/685bfde03eb646c27ed565881917c71c-Abstract.html}{proceedings.neurips.cc/haghtalab20smoothed}.

\bibitem{block2022smoothed}
Adam Block, Yuval Dagan, Noah Golowich, and Alexander Rakhlin.
\newblock ``{Smoothed Online Learning is as Easy as Statistical Learning}''.
\newblock In Proceedings of the 35th Conference on Learning Theory (COLT).
\newblock Pages 1716--1786.
\newblock PMLR~(2022).
\newblock
  url:~\href{https://proceedings.mlr.press/v178/block22a.html}{proceedings.mlr.press/v178/block22a.html}.

\bibitem{lugosi2023online}
G{\'a}bor Lugosi and Gergely Neu.
\newblock ``{{Online-to-PAC Conversions: Generalization Bounds via Regret
  Analysis}}''~(2024).
\newblock  \href{http://arxiv.org/abs/2305.19674}{arXiv:2305.19674}.

\bibitem{nielsen2010quantum}
Michael~A. Nielsen and Isaac~L. Chuang.
\newblock ``{Quantum Computation and Quantum Information: 10th Anniversary
  Edition}''.
\newblock \href{https://dx.doi.org/10.1017/CBO9780511976667}{Cambridge
  University Press}. ~(2010).

\bibitem{valiant1984theory}
Leslie~G. Valiant.
\newblock ``{A Theory of the Learnable}''.
\newblock \href{https://dx.doi.org/10.1145/1968.1972}{Communications of the ACM
  {\bf 27}, 1134--1142}~(1984).

\bibitem{hadiashar2023optimal}
Shima~Bab Hadiashar, Ashwin Nayak, and Pulkit Sinha.
\newblock ``{Optimal Lower Bounds for Quantum Learning via Information
  Theory}''.
\newblock \href{https://dx.doi.org/10.1109/TIT.2023.3324527}{IEEE Transactions
  on Information Theory {\bf 70}, 1876--1896}~(2024).

\bibitem{daniely2014optimal}
Amit Daniely and Shai Shalev-Shwartz.
\newblock ``{Optimal Learners for Multiclass Problems}''.
\newblock In Proceedings of the 27th Conference on Learning Theory (COLT).
\newblock Pages 287--316.
\newblock PMLR~(2014).
\newblock
  url:~\href{https://proceedings.mlr.press/v35/daniely14b.html}{proceedings.mlr.press/v35/daniely14b.html}.

\bibitem{shalev2014understanding}
Shai Shalev-Shwartz and Shai Ben-David.
\newblock ``{Understanding Machine Learning: From Theory to Algorithms}''.
\newblock \href{https://dx.doi.org/10.1017/CBO9781107298019}{Cambridge
  University Press}. ~(2014).

\bibitem{beygelzimer2011contextual}
Alina Beygelzimer, John Langford, Lihong Li, Lev Reyzin, and Robert Schapire.
\newblock ``{Contextual Bandit Algorithms with Supervised Learning
  Guarantees}''.
\newblock In Proceedings of the 14th International Conference on Artificial
  Intelligence and Statistics (AISTATS).
\newblock Pages 19--26.
\newblock PMLR~(2011).
\newblock
  url:~\href{https://proceedings.mlr.press/v15/beygelzimer11a.html}{proceedings.mlr.press/v15/beygelzimer11a.html}.

\bibitem{rakhlin2015sequential}
Alexander Rakhlin, Karthik Sridharan, and Ambuj Tewari.
\newblock ``Sequential complexities and uniform martingale laws of large
  numbers''.
\newblock \href{https://dx.doi.org/10.1007/s00440-013-0545-5}{Probability
  Theory and Related Fields {\bf 161}, 111--153}~(2015).

\bibitem{aaronson2018online}
Scott Aaronson, Xinyi Chen, Elad Hazan, Satyen Kale, and Ashwin Nayak.
\newblock ``Online learning of quantum states''.
\newblock \href{https://dx.doi.org/10.1088/1742-5468/ab3988}{Journal of
  Statistical Mechanics: Theory and Experiment {\bf 2019}, 124019}~(2019).

\bibitem{quek2021private}
Yihui Quek, Srinivasan Arunachalam, and John~A Smolin.
\newblock ``Private learning implies quantum stability''.
\newblock In Proceedings of the 35th International Conference on Neural
  Information Processing Systems (NeurIPS).
\newblock Pages 20503--20515.
\newblock ~(2021).
\newblock
  url:~\href{https://openreview.net/forum?id=9XAxGtK5cdN}{openreview.net/quek21private}.

\end{thebibliography}

\end{document}